\pgfplotsset{compat=1.18}
\newcommand{\cD}{\mathcal{D}}
\newcommand{\cU}{\mathcal{U}}
\newcommand{\wdr}{w_{\rm rgrad}}
\newcommand{\cF}{\mathcal{F}}
\newcommand{\wnn}{W}
\newcommand{\hu}{\hat{u}}
\newcommand{\pibar}{\bar{\pi}}
\newcommand{\bx}{\vb x}
\newcommand{\by}{\vb y}
\newcommand{\bzero}{\vb 0}
\newcommand{\bW}{\vb W}
\theoremstyle{thmstyleone}%
\theoremstyle{thmstyletwo}%
\theoremstyle{thmstylethree}%
\newcommand{\LLO}{\mathcal{L}_{\text{LO}}} 
\newcommand{\Ltwo}[1]{L^{2}(\Omega)} 
\newcommand{\ubar}{\bar{u}} 
\newcommand{\wbar}{\overline{W}} 
\newcommand{\KLdiv}[2]{D_{\mathrm{KL}}\!\left(#1\;\big\|\;#2\right)}
\title[Bayesian BiLO with LoRA]{Bayesian BiLO: Bilevel Local Operator Learning for Efficient Uncertainty Quantification of Bayesian PDE Inverse Problems with Low-Rank Adaptation}
\author*[1]{\fnm{Ray Zirui} \sur{Zhang}}\email{rzhang8@wpi.edu}
\author[2]{\fnm{Christopher E.} \sur{Miles}}\email{chris.miles@uci.edu}
\author[3]{\fnm{Xiaohui} \sur{Xie}}\email{xhx@uci.edu}
\author*[2,4]{\fnm{John S.} \sur{Lowengrub}}\email{jlowengr@uci.edu}
\affil*[1]{\orgdiv{Department of Mathematical Sciences}, \orgname{Worcester Polytechnic Institute}, \orgaddress{\city{Worcester}, \postcode{01609}, \state{MA}, \country{USA}}}
\affil*[2]{\orgdiv{Department of Mathematics}, \orgname{University of California, Irvine}, \orgaddress{\city{Irvine}, \postcode{92697}, \state{CA}, \country{USA}}}
\affil[3]{\orgdiv{Department of Computer Science}, \orgname{University of California, Irvine}, \orgaddress{\city{Irvine}, \postcode{92697}, \state{CA}, \country{USA}}}
\affil[4]{\orgdiv{Department of Biomedical Engineering}, \orgname{University of California, Irvine}, \orgaddress{\city{Irvine}, \postcode{92697}, \state{CA}, \country{USA}}}
\abstract{Uncertainty quantification in PDE inverse problems is essential in many applications. Scientific machine learning and AI enable data-driven learning of model components while preserving physical structure, and provide the scalability and adaptability needed for emerging imaging technologies and clinical insights. We develop a Bilevel Local Operator Learning framework for Bayesian inference in PDEs (B-BiLO). At the upper level, we sample parameters from the posterior via Hamiltonian Monte Carlo, while at the lower level we fine-tune a neural network via low-rank adaptation (LoRA) to approximate the solution operator locally. B-BiLO enables efficient gradient-based sampling without synthetic data or adjoint equations and avoids sampling in high-dimensional weight space, as in Bayesian neural networks, by optimizing weights deterministically. We analyze errors from approximate lower-level optimization and establish their impact on posterior accuracy. Numerical experiments across PDE models, including tumor growth, demonstrate that B-BiLO achieves accurate and efficient uncertainty quantification.
}
\keywords{Bayesian inference, PDE inverse problems, Bilevel Local Operator Learning, Low-Rank Adaptation, Hamiltonian Monte Carlo}
\pacs[MSC Classification]{65M32, 62F15, 68T07}
\begin{document}
\maketitle

\section{Introduction}
Uncertainty quantification in computational models governed by partial differential equations (PDEs) 
is essential across scientific and engineering disciplines, including seismic imaging \cite{dengOpenFWILargeScaleMultiStructural2023,martinStochasticNewtonMCMC2012,yangSeismicWavePropagation2021,bui-thanhExtremescaleUQBayesian2012}, personalized medicine \cite{lipkovaPersonalizedRadiotherapyDesign2019,chaudhuriPredictiveDigitalTwin2023}, climate modeling \cite{senGlobalOptimizationMethods2013}, and engineering design \cite{cookOptimizationTurbulenceModel2019}. These fields frequently require solving PDE inverse problems within a Bayesian framework \cite{stuartInverseProblemsBayesian2010}, where the objective is to estimate the posterior distribution of model parameters given observed data. Typically, this process involves sampling PDE parameters and repeatedly solving the forward PDE. A related approach, PDE-constrained optimization, aims to identify a single optimal parameter estimate, thus offering computational efficiency but lacking the ability to quantify uncertainty.
Beyond the cost and challenges of solving PDEs, mathematical models may be misspecified due to limited scientific understanding and therefore require continual refinement as new data become available. 
Scientific machine learning offers a framework for learning components of PDE models directly from data while retaining physical structure. Moreover, the rapid pace of new imaging technologies and clinical insights demands computational tools that are fast, scalable, and adaptable—capabilities well aligned with modern AI/ML methods.
Such needs have driven growing interest in scientific machine learning methods that incorporate data-driven learning of model components.

The Bilevel Local Operator learning (BiLO) method has recently shown promise for solving PDE inverse problems in the constrained optimization framework \cite{zhangBiLOBilevelLocal2025a}. 
A brief review of BiLO is provided in the supplementary material (SM).
In this work, we introduce a novel framework, Bayesian BiLO (B-BiLO), which redesigns this bilevel structure for the more challenging setting of Bayesian inference. 
B-BiLO enables robust uncertainty quantification by integrating local operator learning with gradient-based MCMC sampling and transfer learning techniques for efficiency.

Addressing Bayesian PDE inverse problems involves two design choices: the \textbf{PDE solver} and the \textbf{sampling method}. PDE solutions can be computed using numerical methods (such as finite difference or finite element methods), physics-informed neural networks (PINNs)\cite{raissiPhysicsinformedNeuralNetworks2019}, or neural operators such as Fourier Neural Operator \cite{liFourierNeuralOperator2021} and DeepONet \cite{luLearningNonlinearOperators2021}. Each approach has its advantages and limitations. Numerical methods provide fast, accurate, and convergent solutions but can struggle with high-dimensional PDEs \cite{grossmannCanPhysicsinformedNeural2024}. PINNs offer a mesh-free approach, effective in some higher-dimensional PDE problems, and a unified framework for inverse problems. Neural operators, while requiring many synthetic numerical solutions to train, enable rapid evaluations and serve as PDE surrogates; however, their accuracy typically degrades when evaluated outside the distribution of the training dataset \cite{nelsenOperatorLearningMeets2025}.
For sampling, Markov Chain Monte Carlo (MCMC) \cite{hastingsMonteCarloSampling1970} methods can be either gradient-based (e.g., Hamiltonian Monte Carlo \cite{nealMCMCUsingHamiltonian2011,nealBayesianLearningNeural1996,girolamiRiemannManifoldLangevin2011}) or derivative-free (e.g., Ensemble Kalman Sampling \cite{garbuno-inigoInteractingLangevinDiffusions2020}). Gradient-based methods efficiently explore posterior distributions, but computing the gradient can be expensive (compared with gradient-free approaches) or impractical (e.g., when the PDE solver is a black box).

Numerical PDE solvers can be paired with gradient-based sampling methods. The \textbf{Adjoint Method} computes gradients of the likelihood function with respect to PDE parameters \cite{bui-thanhSolvingLargescalePDEconstrained2014}. While the adjoint method ensures accurate solutions and gradients, deriving and regularizing the adjoint equations for complex PDEs can be challenging and solving both the forward and adjoint equations numerically can be computationally intensive. When only the forward solver is available, it is more practical to use derivative-free sampling methods such as \textbf{Ensemble Kalman Sampling} (EnKS), which employ ensembles of interacting particles to approximate posterior distributions \cite{garbuno-inigoInteractingLangevinDiffusions2020}. Additionally, variants of \textbf{Ensemble Kalman Inversion} (EnKI) can efficiently estimate the first two moments in Bayesian settings \cite{huangEfficientDerivativefreeBayesian2022,huangIteratedKalmanMethodology2022,pensoneaultEfficientBayesianPhysics2024}.

Among neural PDE solvers, \textbf{Physics-Informed Neural Networks (PINNs)} \cite{raissiPhysicsinformedNeuralNetworks2019} have become widely adopted. 
In PINNs, the PDE residual is enforced as a soft constraint.
By minimizing a weighted sum of the data loss and the PDE residual loss, PINNs can fit the data, solve the PDE, and infer the parameters at the same time. 
For uncertainty quantification, Bayesian PINNs (BPINNs) \cite{yangBPINNsBayesianPhysicsinformed2021} represent PDE solutions using Bayesian neural networks, and use Hamiltonian Monte Carlo (HMC) \cite{duaneHybridMonteCarlo1987,nealBayesianLearningNeural1996} to sample the joint posterior distributions of PDE parameters and neural network weights. 
Derivative-free methods, such as EnKI, can also be effectively combined with neural PDE solvers, as demonstrated in PDE-constrained optimization problems \cite{guthEnsembleKalmanFilter2020} and Bayesian PINNs \cite{pensoneaultEfficientBayesianPhysics2024}.

\textbf{Neural Operators (NOs)} approximate the PDE solution operator (parameter-to-solution map), providing efficient surrogate models for forward PDE solvers \cite{kovachkiNeuralOperatorLearning2022}. 
Once trained, these surrogates integrate seamlessly with Bayesian inference frameworks, leveraging rapid neural network evaluations for posterior distribution sampling using both gradient-based and derivative-free methods \cite{pathakFourCastNetGlobalDatadriven2022,luMultifidelityDeepNeural2022,maoPPDONetDeepOperator2023}. Prominent examples include the Fourier Neural Operator (FNO) \cite{liFourierNeuralOperator2021,liPhysicsInformedNeuralOperator2024,whitePhysicsInformedNeuralOperators2023,akyildizEfficientPriorCalibration2025}, Deep Operator Network (DeepONet) \cite{luLearningNonlinearOperators2021,wangLearningSolutionOperator2021}, and In-Context Operator (ICON) \cite{yangIncontextOperatorLearning2023}, among others \cite{oleary-roseberryDerivativeInformedNeuralOperator2024,molinaroNeuralInverseOperators2023}. Nevertheless, neural operator performance heavily depends on the training dataset, potentially limiting accuracy in PDE solutions and consequently degrading posterior distribution fidelity \cite{caoResidualbasedErrorCorrection2023}.

Our framework integrates key practical advantages from existing sampling and PDE-solving methods. We employ gradient-based MCMC, which generally explores the posterior distribution more efficiently than derivative-free approaches. The PDE solution and the corresponding likelihood gradients with respect to parameters are computed using a neural solver within a novel bilevel optimization framework. This design is flexible across PDE models, and does not require either deriving the adjoint equations or curating pretraining datasets. Our methodology improves accuracy through strong enforcement of PDE constraints without balancing data fitting and PDE residuals as in PINNs, and enhances efficiency by optimizing network weights deterministically rather than sampling in a high-dimensional weight space as in Bayesian neural networks.
The key contributions of this work are as follows:
\begin{itemize}  
  \item We introduce a Bayesian Bilevel local operator (B-BiLO) learning approach that enforces strong PDE constraints, leading to improved accuracy in uncertainty quantification and parameter inference.
  \item We avoid direct sampling in the high-dimensional space of Bayesian neural networks, leading to more efficient sampling of unknown PDE parameters or functions.
  \item We apply Low-Rank Adaptation (LoRA)\cite{huLoRALowRankAdaptation2021} to enhance both efficiency and speed in fine-tuning and sampling, significantly reducing computational cost while maintaining accuracy. 
  \item We prove there is a direct link between the tolerance for solving the lower level problem  to approximate the local solution operator and the accuracy of the resulting uncertainty quantification. In particular, we estimate the error in the posterior distribution due to inexact minimization of the lower level problem.
\end{itemize}  


\section{Results}
\label{s:Results}

\subsection{Method Overview}
This section provides an overview of B-BILO. Full methodological details and theoretical analysis are provided in Section \ref{method} and the Supplementary Material (SM).

\paragraph{PDE Bayesian Inverse Problem}
We consider problems governed by partial differential equations (PDEs). 
Let $u:\Omega \to \mathbb{R}$ be a function (the state variable) defined on a domain $\Omega \subset \mathbb{R}^d$. 
The governing PDE, together with boundary and initial conditions, can be written compactly as
\[
\mathcal{F}[u,\theta](\bx) = \bzero, \qquad \bx \in \Omega,
\]
where $\theta \in \mathbb{R}^m$ are the PDE parameters and $\mathcal{F}$ is the associated differential operator, defined by
\begin{equation}
  \cF[u,\theta](\bx) := F(\cD^k u(\bx), \dots, \cD u(\bx), u(\bx), \theta), \qquad \bx\in\Omega,
\end{equation}
with $\mathcal{D}^k$ denoting the $k$-th order derivative operator. 
The solution to the PDE depends implicitly on $\theta$; we denote this solution by $u(\cdot,\theta)$.

Let $P(\theta)$ be the prior distribution of the PDE parameters. Let $D_u$ be the observed data, which is typically a noisy observation of the state $u$ at a set of measurement points. 
The likelihood $P(D_u|\theta)$ measures the probability of observing the data $D_u$ given the PDE parameters $\theta$. Since it is usually a functional of the state $u$, thus we also denote it as $\ell[u]$.
For example, if we assume $D_u = \{\hu_i\}_{i=1}^N$ with $\hu_i = u(\bx_i,\theta) + \eta_i$, where $\eta_i$ is Gaussian noise of mean 0 and variance $\sigma_d^2$,
then the likelihood of the observed data $D_u$ given the PDE parameters $\theta$ is
\begin{equation}
  \ell[u] = \prod_{i=1}^{N} \frac{1}{\sqrt{2\pi \sigma_d^2}} \exp\left( -\frac{1}{2\sigma_d^2} \left| u(\bx_i) - \hu_i\right|^2 \right)
\end{equation}
which is $P(D_u|\theta)$ if $u$ is the solution at $\theta$.
Let $P(\theta|D_u)$ be the posterior distribution of the PDE parameters.
By Bayes' theorem, $P(\theta|D_u) \propto P(D_u|\theta) P(\theta)$. 
The potential energy, $\cU[u,\theta]$, which depends on both the state $u$ and the PDE parameters $\theta$,
is defined as the negative of the log-posterior distribution
\begin{equation}
  \cU[u, \theta] = -\log P(\theta|D_u) = -\log \ell [u] - \log P(\theta)
  \label{potential energy}
\end{equation}
We also denote $U(\theta) = \cU[u(\cdot,\theta), \theta]$, as the potential energy ultimately depends on $\theta$ if the state $u$ is the solution at $\theta$.
Sampling the posterior distribution with gradient-based MCMCs, such as Langevin Dynamics (LD) or Hamiltonian Monte Carlo (HMC), requires computing the gradient of the potential energy $\nabla_{\theta} U(\theta)$ to effectively explore the posterior distribution.

\paragraph{Bayesian Bilevel Local Operator Learning (B-BiLO).}
We consider functions of the form $u(\bx, \theta)$ that depend on both the spatial-temporal variable $\bx$ and the PDE parameters $\theta$.
We call $u(\cdot,\theta)$ a local operator at $\theta$ if
\begin{align*}
  &\cF[u(\cdot,\theta),\theta](\bx) = \bzero  \quad(\text{Condition 1}),\\
  &d_\theta \cF[u(\cdot,\theta),\theta](\bx) = \bzero \quad(\text{Condition 2}),
\end{align*}
for all $x\in\Omega$, where $d_\theta$ is the total derivative with respect to $\theta$.
Condition 1 requires the residual of the PDE to be zero, i.e., that $u(\cdot,\theta)$ is a solution of the PDE at $\theta$.
Condition 2 requires the total derivative of the PDE residual with respect to $\theta$ to be zero, ensuring that the derivative of $u(\cdot,\theta)$ with respect to $\theta$ remains consistent with the PDE constraints.
This ensures that small perturbations in $\theta$ lead to small perturbations in $u(\cdot,\theta)$ that still approximately satisfy the PDE constraints. In particular, if we consider a local perturbation $\theta \to \theta+\delta\theta$, a formal Taylor expansion of $\mathcal F$ about $\theta$ is zero to second order in $\delta\theta$ (e.g., see \cite{zhangBiLOBilevelLocal2025a}).

In this work, we represent the local operator by a neural network $u(\bx, \theta; \wnn)$, where $\wnn$ are the weights of the neural network.
The two conditions for $u$ to be a local operator at $\theta$ lead to the definition of the local operator loss $\LLO$:
\begin{equation}
  \LLO(\theta, \wnn) = \norm{\cF[u(\cdot,\theta;W),\theta]}^2_2 + \wdr \norm{d_{\theta} \cF[u(\cdot,\theta;W),\theta]}^2_2,
  \label{eq:llo}
\end{equation}
We call the first term the residual loss and the second term the residual-gradient loss. Both terms can be approximated using the Mean Squared Error (MSE) at a set of collocation points.
The weight $\wdr> 0$ is a scalar hyperparameter that controls the relative importance of the residual-gradient loss. 
In principle, both terms can be minimized to nearly zero.
For fixed $\theta$, by minimizing the local operator loss $\LLO(\theta, \wnn)$ with respect to the weights $\wnn$, we approximate the local operator $u(\bx, \theta)$ around $\theta$.

We consider the following Bayesian inference problem with a bilevel structure:
\begin{equation}
  \label{eq:bi_bayes}
  \begin{cases}
    \theta \sim \exp \left(-\cU[u(\cdot,\theta;W^*(\theta)),\theta]\right)\\
    \wnn^*(\theta) = \arg\min_{\wnn} \LLO(\theta, \wnn)\\
  \end{cases}
\end{equation}
At the upper level, we sample the PDE parameters $\theta$ from the posterior distribution using Hamiltonian Monte Carlo (HMC)~\cite{nealMCMCUsingHamiltonian2011}.
At the lower level, we train a network to approximate the local operator by minimizing the local operator loss with respect to the weights of the neural network.

\paragraph{Algorithm and Theoretical Analysis}
We use HMC with the leapfrog integrator to sample the posterior distribution at the upper level. HMC requires computing the gradient of the potential energy $\nabla_{\theta} U(\theta)$ at various values of $\theta$, which require solving the lower-level problem to some tolerance $\epsilon$.
Detailed algorithms are provided in Sec.~\ref{method:algo}.
Under mild stability/smoothness assumptions, we show that the error in the gradient $\nabla_{\theta} U(\theta)$ and the error in the posterior distribution, both due to inexact minimization of the lower level problem, are $O(\epsilon)$.
The proofs are provided in Sec.~\ref{method:theory} and SM.

\paragraph{Advantages over existing methods}
Many existing methods, such as Bayesian PINNs (BPINNs) \cite{yangBPINNsBayesianPhysicsinformed2021,linkaBayesianPhysicsInformed2022}, represent the PDE solution using a Bayesian neural network $v(\bx;W)$,
(note that $\theta$ is not an input to the network), 
where $W$ follows a prior distribution $P(W)$. Uncertainty in $W$ induces uncertainty in the PDE residual $D_f = \{(\bx_f^{(i)}, \eta_f)\}_{i=1}^{N_f}$, $\eta_f \sim \mathcal{N}(0, \sigma_f^2)$, and give rise to a likelihood function for the residual term:
\begin{equation}
  P(D_f|W, \theta) = \prod_{i=1}^{N_f} \frac{1}{\sqrt{2\pi \sigma_f^2}} \exp\left( -\frac{1}{2\sigma_f^2} \left| \cF[v(\bx_f^{(i)}; W), \theta] \right|^2 \right)
  \label{eq:residual}
\end{equation}
which serves as a soft constraint to enforce the PDE, and depends on $\sigma_f$.
Assuming $\theta$ and $W$ are independent, the following joint posterior is sampled using HMC.
\begin{equation}
  P(W, \theta|D_u, D_f) \propto P(D_u|W) P(D_f|W, \theta) P(W) P(\theta).
\end{equation}
These would require sampling the challenging high-dimensional joint posterior of the PDE parameters $\theta$ and network weights $\wnn$.
Making $\sigma_f$ small enforces the PDE strongly, but also makes the loss ill-conditioned and leads to inefficient sampling.
In contrast, B-BiLO avoids these issues by optimizing $W$ deterministically for each $\theta$, and thus only the low-dimensional parameter space $\theta$ is sampled. 
This approach removes the need for priors on weights, avoids the difficulties setting an artificial residual noise parameter, and enables more stable and efficient exploration of the posterior. 
A detailed review of BPINNs and it's challenges are provided in SM.

\paragraph{Efficient Fine-Tuning via LoRA}
To increase efficiency, we apply Low-Rank Adaptation (LoRA) \cite{huLoRALowRankAdaptation2021} to reduce the number of trainable parameters at the lower level.
Let a hidden-layer weight be $W\in \mathbb{R}^{p\times p}$. 
Across parameters $\theta$ and an initial guess $\theta_0$, BiLO updates only a low-rank increment
\[
W^*(\theta) \approx W^*(\theta_0) + A B,\qquad A\in \mathbb{R}^{p\times r},\; B\in \mathbb{R}^{r\times p},\; r\ll p,
\]
while keeping the $W^*(\theta_0) $ fixed.
We refer to this procedure as ``LoRA($r$)'', denoting LoRA fine-tuning with rank $r$ (Algorithm \ref{alg:loraft}), whereas ``Full FT'' denotes full fine-tuning of all weights (Algorithm \ref{alg:fullft}).
Details of the neural network architecture and implementation of LoRA are provided in Methods.~\ref{method:arch} and \ref{method:lora}.

\paragraph{Schematic of B-BiLO with LoRA}
A schematic of the B-BiLO framework is shown in Figure~\ref{f:schema}.
The figure is based on a model boundary value problem $-\theta u_{xx}=\sin(\pi x)$ with the boundary condition $u(0) = u(1) = 0$. 
In this case, the full operator is given by $u(x,\theta) = \sin(\pi x) / (\pi^2 \theta)$, which solves the PDE for all $\theta>0$.
Figure~\ref{f:schema} panel (A) illustrates the upper-level problem: the goal is to sample the PDE parameters $\theta$ from the posterior distribution. For gradient-based MCMC methods, we need to compute the gradient of the potential energy $\nabla_{\theta} U(\theta)$ at various values of $\theta$ (orange arrows).
Therefore, at any fixed $\theta$, we need access to the local behavior of the potential energy, shown as the blue strip near $\theta_1$ and $\theta_2$. 

In Figure~\ref{f:schema} panel (B), the gray surface is the full PDE operator $u(x,\theta)$. 
At some particular $\theta_1$, the local operator $u(x,\theta_1)$ approximates a small neighborhood of the full operator near $\theta_1$ (blue surface). 
The condition of the local operator is that the residual and the gradient of the residual with respect to $\theta$ are both zero at any fixed $\theta$.
Knowing the PDE solution in the neighborhood of $\theta_1$ allows us to compute the gradient of the potential energy at $\theta_1$.

Figure~\ref{f:schema} panel (C) illustrates the lower-level problem and LoRA: we train a neural network to approximate the local operator at different $\theta$, e.g. $\theta_1$ and $\theta_2$. The local operator at each $\theta_i$ is approximated by a neural network $u(x,\theta_i; \wnn)$ and is obtained by minimizing the local operator loss $\LLO(\theta_i, \wnn)$. Instead of updating all the weights $\wnn$ at each $\theta_i$, we use LoRA to update the weights for different $\theta_i$, which significantly reduces the number of trainable parameters and thus the computational cost.

\begin{figure}[!htb]
  \centering
  \includegraphics[keepaspectratio=true, width=\textwidth]{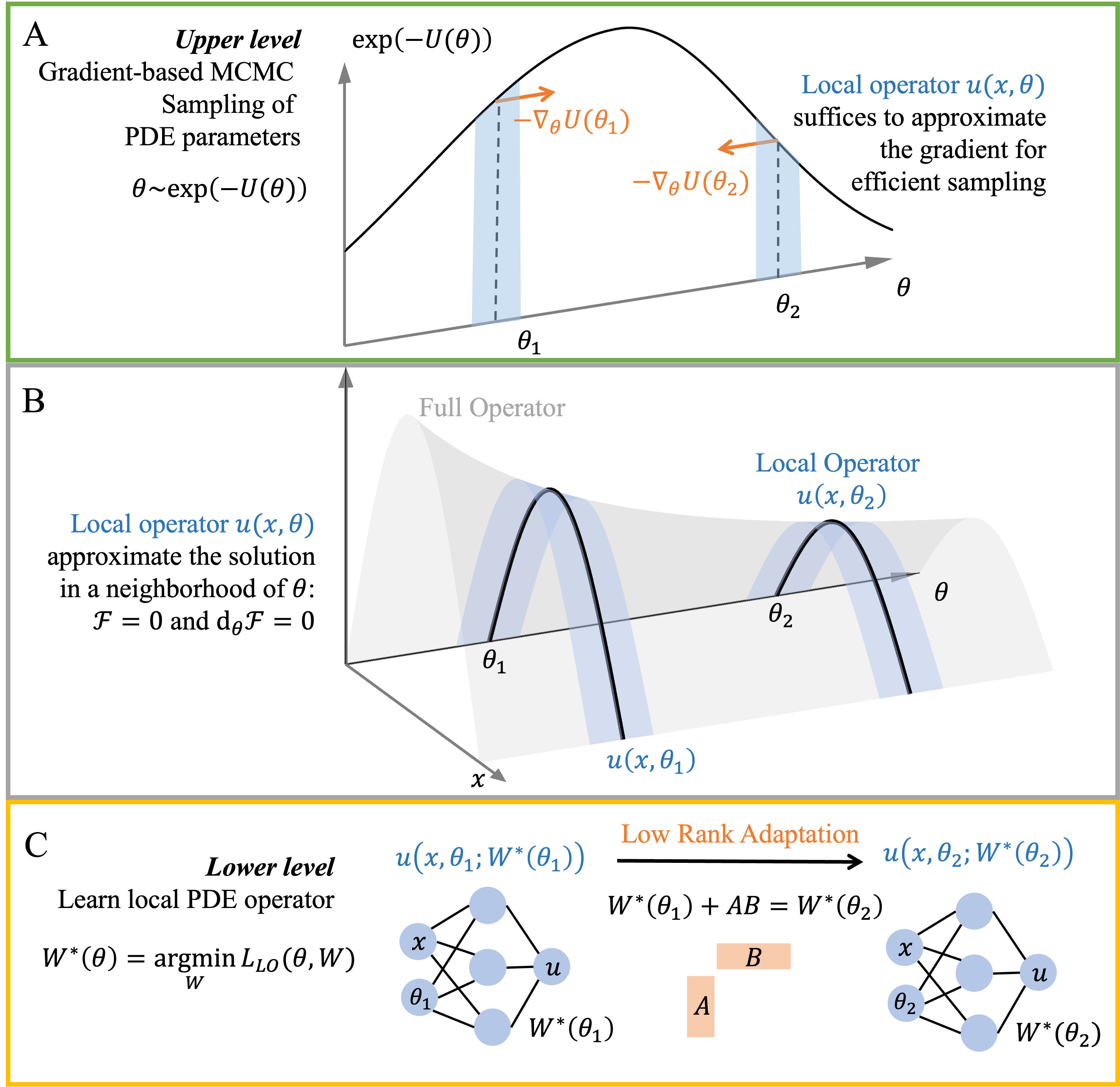}
  \caption{A schematic for solving the Bayesian PDE inverse problem using BiLO and LoRA. 
  (A) At the upper level, we sample the PDE parameters $\theta$ from the posterior distribution using gradient-based MCMC. 
  (B) The full operator $u(x, \theta)$ (gray surface) and its local approximation $u(x, \theta_1)$ and $u(x, \theta_2)$ (blue surface), which satisfies vanishing residual and residual-gradient conditions. The local operator suffices to compute the gradient of the potential energy $U(\theta)$ at $\theta_1$ and $\theta_2$.
  (C) At the lower level, we train the neural network to approximate the local operator (blue surface) at different $\theta$. As $\theta$ changes, only low rank adaptation (LoRA) is used to update the neural network weights $\wnn$. 
  }
  \label{f:schema}
\end{figure}

\subsection{Example 1: Nonlinear Poisson Equation}
\label{ss:nonlinpoi}
In this section, we compare B-BiLO with BPINN \cite{yangBPINNsBayesianPhysicsinformed2021} on a nonlinear Poisson problem to demonstrate the advantages of B-BiLO: physical uncertainty quantification, stable sampling with larger step sizes, and reduced computational cost. 
We consider the 1D nonlinear Poisson equation \cite{yangBPINNsBayesianPhysicsinformed2021}:
\begin{equation}
    \lambda u_{xx}(x) + k \tanh(u(x)) = f(x), \quad x\in(-0.7, 0.7)
\end{equation}
where $\lambda = 0.01$, and both $f$ and the boundary conditions are derived from the ground-truth solution $u^{\rm GT}(x) = \sin^3(6x)$ with $k^{\rm GT} = 0.7$.
The goal is to infer $k$ from observations of $u$, with the prior that $k$ is uniformly distributed on $[0.4, 1]$.
A key feature of this problem is that the solutions are odd: both $u$ and $-u(-x)$ solve the PDE, and thus $u(0) = 0$ for all $k$ in the range. Therefore, there should be no uncertainty at the origin.

\begin{figure}[!htb]
  \centering
  \includegraphics[keepaspectratio=true, width=\textwidth]{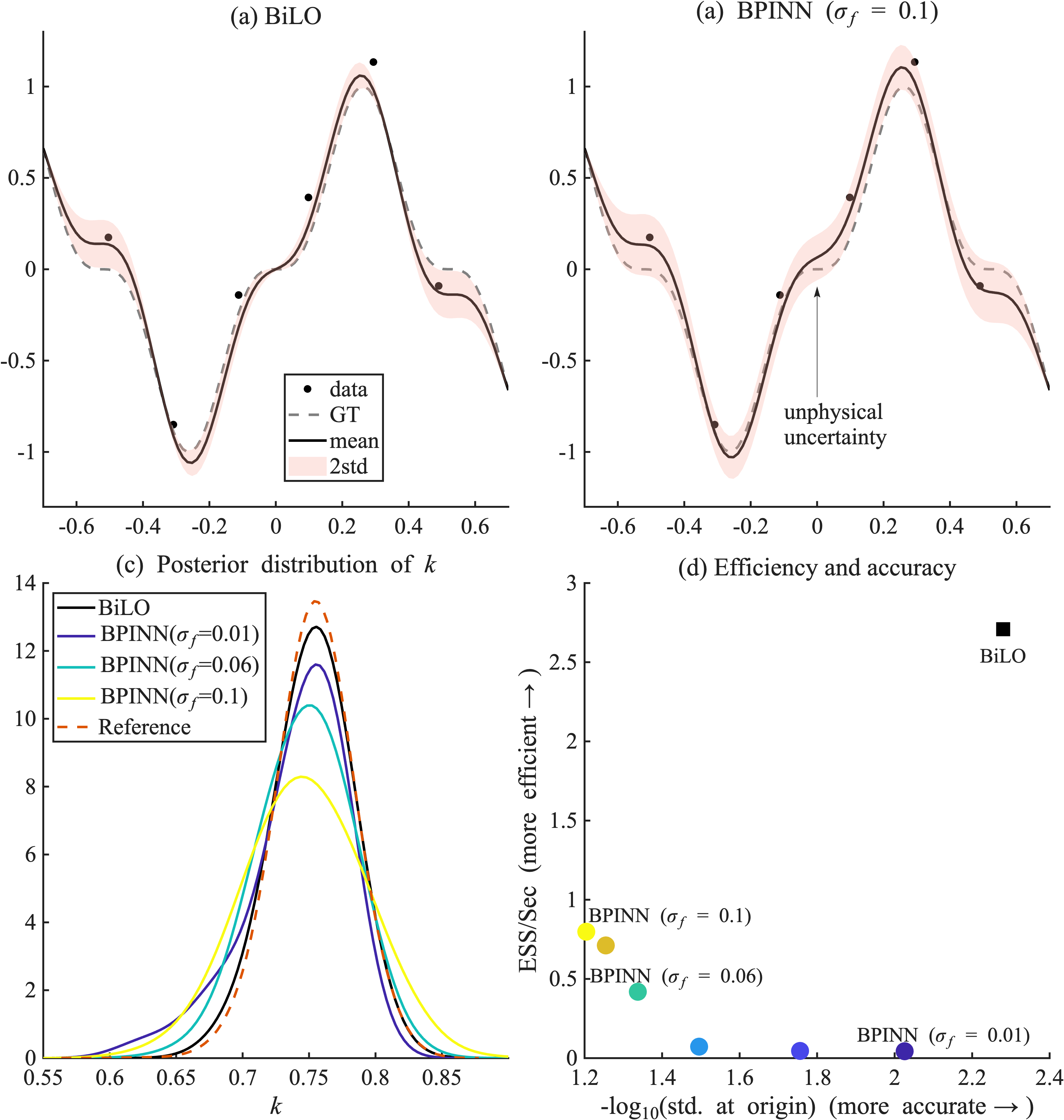}
  \caption{ 
  Inference results for the nonlinear Poisson problem.
  (a) Noisy data, GT solution, mean and std of inferred $u$ using B-BiLO
  (b) results using BPINN with $\sigma_f=0.1$. 
  (c) Posterior distribution of the PDE parameter $k$ using BiLO and BPINN with different $\sigma_f$, compared with the reference (Metropolis-Hastings with numerical solver).
  (d) Accuracy-efficiency comparison:
  x-axis shows the -log of the std. at the orgin, larger values indicate better physical uncertainty quantification.
  y-axis shows the effective sample size (ESS) per second of wall time, larger values indicate more efficient sampling.
  As $\sigma_f$ decreases, BPINN becomes more accurate but less efficient.
  Overall B-BiLO is more accurate and efficient at 
  uncertainty quantification.}
  \label{f:nonlin_uq}
\end{figure}

Figure \ref{f:nonlin_uq} compares B-BiLO and BPINN on the nonlinear Poisson problem.
Panel (a) shows the noisy data, GT, mean and 2 std. region of the inferred solution $u$ from B-BiLO. The samples of $u$ satisfy the PDE accurately—evidenced by nearly zero posterior standard deviation at the origin.
Panel (b) shows BPINN with $\sigma_f=0.1$, where the nonzero uncertainty at the origin indicates that individual samples are not solving the PDE accurately.
Panel (c) presents the posterior of the PDE parameter $k$.
The reference distribution is obtained using Metropolis–Hastings sampling (which is generally less efficient than HMC) combined with a accurate numerical solver.
B-BiLO closely matches the reference distribution, while BPINN remains accurate only for small $\sigma_f$; larger $\sigma_f$ reduces fidelity.
Panel (d) compares accuracy and efficiency. The x-axis (‒$\log_{10}$ of the std at origin) measures physical consistency—larger values indicate more accurate PDE solution.
While the y-axis measures the efficiency using the Effective Sample Size (ESS, which quantifies the number of independent samples) per second.
For each $\sigma_f$, we cross validated the step size $\delta t$ = {0.01, 0.005, 0.001, 0.0005} and report the best ESS/sec.
As $\sigma_f$ decreases (color from yellow to blue), BPINN becomes more accurate but less efficient, due to the stability constraint on the step size.
B-BiLO achieves both high accuracy and high efficiency simultaneously, without tuning $\sigma_f$.
Additional visualizations are provided in SM.

Figure~\ref{f:nonlin_lora} illustrates the accuracy and efficiency of B-BiLO with LoRA of varying ranks.
Panel (a) compares the posterior distribution of $k$ for LoRA ranks 4 and 8 against Full FT, demonstrating that LoRA does not compromise accuracy.
Panel (b) reports the relative efficiency of LoRA (rank 8) versus Full FT, plotted as ratios with respect to Full FT. The x-axis shows the number of neurons per layer (depth fixed at 6), indicating increasing network size. The right y-axis shows the relative maximum memory usage, which remains consistently below 1, confirming LoRA’s reduced memory footprint. The left y-axis (blue line) shows the relative average wall     time per sample. For small networks, LoRA is slower than Full FT (ratio above 1), while for large networks it becomes faster (ratio below 1). This reflects a trade-off: LoRA reduces the number of trainable parameters but introduces computational overhead and may require more iterations to reach comparable accuracy. 
As network size grows, the speedup due to reduced trainable parameters outweighs the cost of the overhead and additional iterations.
\begin{figure}[!htb]
  \centering
  \includegraphics[keepaspectratio=true, width=\textwidth]{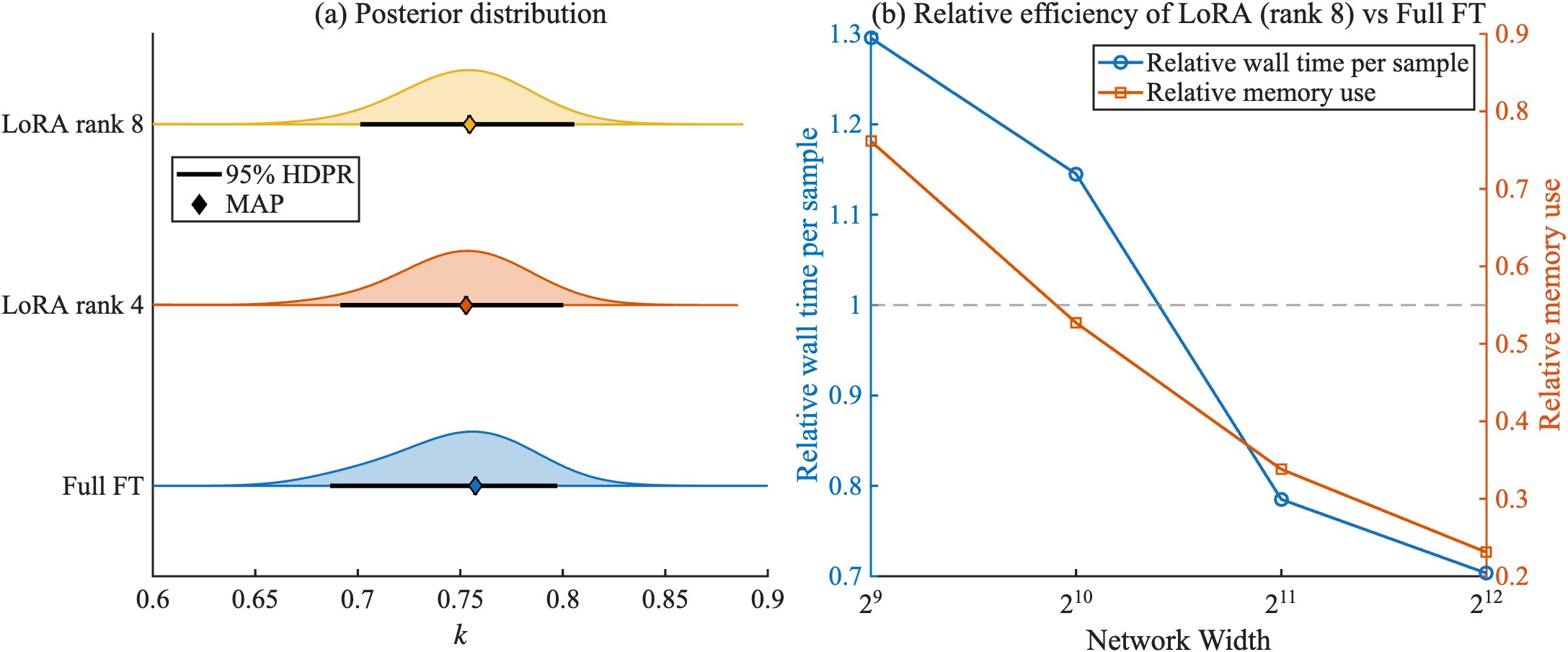}
  \caption{ 
    Accuracy and efficiency of B-BiLO with LoRA of different ranks.
(a) Comparison of the posterior distribution of $k$ obtained by B-BiLO with LoRA of different ranks and Full FT. Diamond markers denote the MAP estimates, and bars indicate the 95\% highest-density posterior region (HDPR).
(b) Relative efficiency of LoRA (rank 8) as ratios with respect to Full FT. The x-axis shows the number of neurons per layer; the left y-axis (blue line) shows the relative average time per sample, and the right y-axis shows the relative maximum memory usage during sampling.}
  \label{f:nonlin_lora}
\end{figure}

\subsection{Example 2: Inferring Patient-Specific Tumor Growth Parameters from MRI Data}
We consider a real-world application of B-BiLO for uncertainty quantification of patient-specific parameters GBM growth models using patient MRI data in 2D \cite{zhangPersonalizedPredictionsGlioblastoma2025,balcerakIndividualizingGliomaRadiotherapy2025,ezhovLearnMorphInferNewWay2023,scheufeleFullyAutomaticCalibration2021,lipkovaPersonalizedRadiotherapyDesign2019}. 
Glioblastoma (GBM) is a highly aggressive brain tumor with poor prognosis, where tumor cells infiltrate beyond lesions visible on MRI, limiting the effectiveness of current treatment strategies and leading to recurrence, thereby underscoring the need for personalized prediction. 
The infiltration and proliferation of tumor cells can be  modeled by reaction–diffusion equations \cite{swansonQuantitativeModelDifferential2000}.
Here, we use the formulation presented in \cite{zhangPersonalizedPredictionsGlioblastoma2025}. Let $\Omega$ denote the 2D brain domain derived from MRI data, and the normalized tumor cell density $u(\vb x, t)$ satisfy the Fisher–KPP equation with Neumann boundary conditions:
\begin{equation} \label{eq:gbm}
    \pdv{u}{t} = D \bar{D} \grad \cdot (P(\vb x) \grad u) + \rho \bar{\rho} u (1-u) \quad \text{in }  \Omega 
\end{equation}
where $P$ depends on the tissue distribution (e.g., white and grey matter) obtained from the MRI data, 
$\bar{D}$ and $\bar{\rho}$ are known patient-specific characteristic parameters based on the data,
and $D$ and $\rho$ are the unknown nondimensionalized parameters that we aim to infer.

We consider two regions of interest in the tumor, the whole tumor (WT) region and the tumor core (TC) region.
Let $\hat{\by}^{s}$, $s\in \{\rm WT, TC\}$ be indicator functions of the WT and TC regions, which can be obtained from the MRI data and serves as the observational data in the inverse problem.
We define our predicted segmentations to be regions in which the tumor cell density $u$ at the nondimensional $t=1$ lies above a certain threshold $u_c^s$.
The likelihood function is defined based on error between the predicted segmentation $\by^s$ and the observed segmentation $\hat{\by}^s$.
We aim to find the posterior distribution of the parameters $(D, \rho, u_c^{\rm WT}, u_c^{\rm TC})$.
Details of the setup and validation with synthetic data are provided in the SM (see also \cite{zhangPersonalizedPredictionsGlioblastoma2025}).

Figure \ref{f:gbm} summarizes the inference results for one patient case.
In panel (a), the posterior distributions of the parameters show that the model parameters are well identified with moderate uncertainty, and the results from B-BiLO with LoRA rank 8 closely match those from Full FT. Panel (b) shows the posterior mean and standard deviation of the tumor cell density $u$, respectively.  In panel (c), the MAP segmentation is seen to closely match the observed WT and TC regions on the MRI, and the predicted infiltration margin (where $u=1\%$) extends slightly beyond the visible lesion, suggesting possible microscopic invasion. In panel (d), the standard deviation of the tumor cell density $u$ is shown. Because the data are noisy and the boundary between the TC and WT regions is poorly defined, the estimated uncertainty is higher in this transition zone.
Although no direct ground truth exists for the true tumor cell density in patients, the predictive capability of the model can be indirectly validated using recurrence data \cite{zhangPersonalizedPredictionsGlioblastoma2025,balcerakIndividualizingGliomaRadiotherapy2025,balcerakPhysicsRegularizedMultiModalImage2024}.
This example illustrates the capability of B-BiLO to handle real patient data for risk-aware treatment planning; applications to full 3D patient datasets are left for future work.

\begin{figure}[!htb]
  \centering
  \includegraphics[keepaspectratio=true, width=\textwidth]{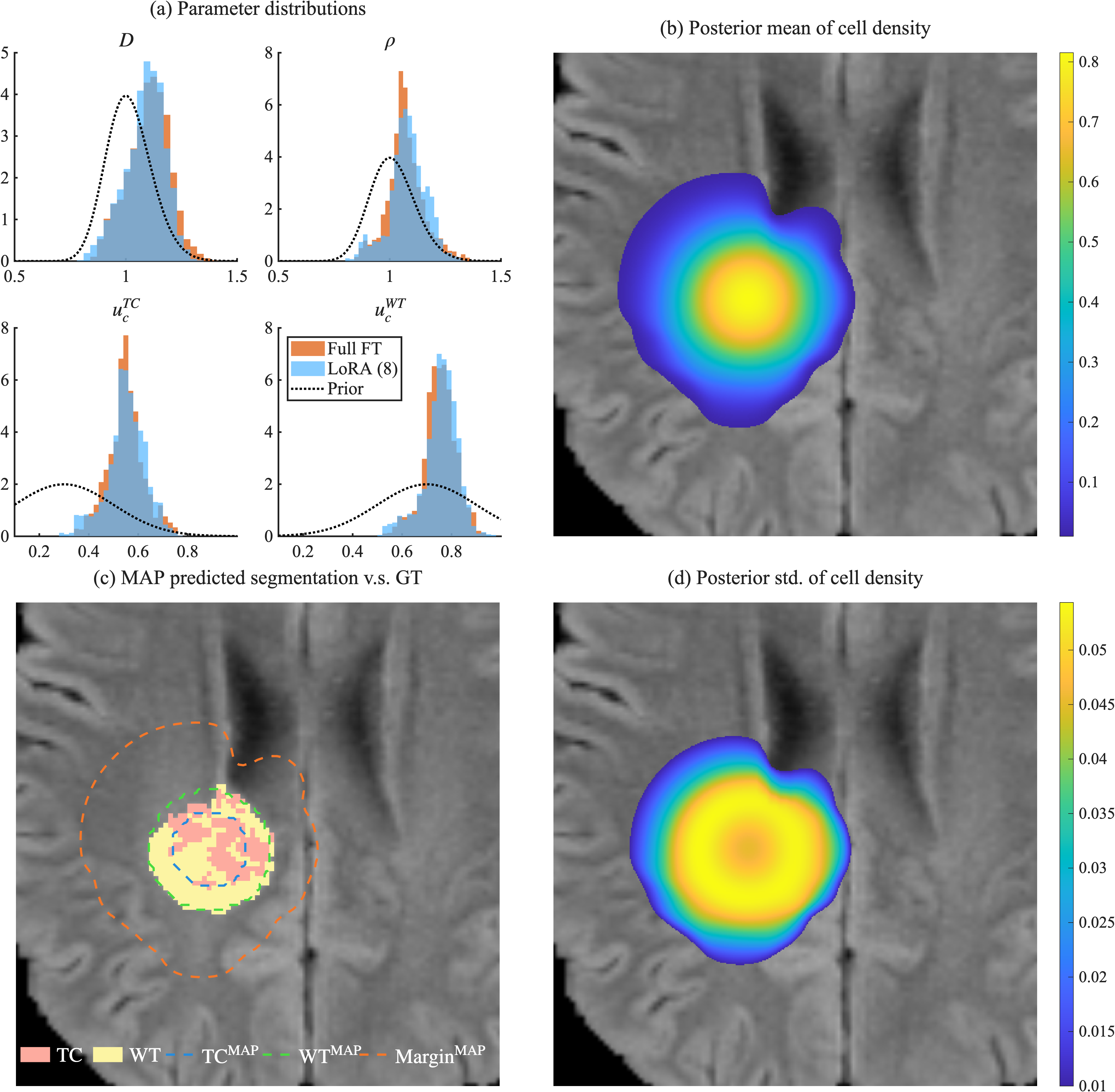}
  \caption{
Inference of patient-specific GBM growth parameters using B-BiLO with LoRA rank 8.
(a) Posterior distribution of the parameters $D, \rho, u_c^{\rm WT}, u_c^{\rm TC}$ and the prior (dashed), compared with Full FT.
(b) Posterior mean of the tumor cell density $u$.
(c) Ground truth MRI data with segmented WT and TC regions (filled), the MAP predicted segmentation (contour), and the predicted infiltration margin (where $u=1\%$).
(d) Posterior standard deviation of the tumor cell density $u$.
}
  \label{f:gbm}
\end{figure}

\subsection{Example 3: Inferring Stochastic Rates from Particle Data}
\label{ss:pointproc}

We consider a problem motivated by inferring the dynamics of gene expression from static images of mRNA molecules in cells \cite{milesInferringStochasticRates2024,milesIncorporatingSpatialDiffusion2025}.
The steady-state spatial distribution of these molecules can be modeled by the following boundary value problem with a singular source:
\begin{equation} \label{eq:pde_pointproc}
     \frac{d^2u}{dx^2} + \lambda \delta (x-z) - \mu u = 0, \quad u(0) = u(1) = 0.
\end{equation}
Here, the solution $u(x)$ represents the intensity of a spatial Poisson point process describing the locations of mRNA particles in a simplified 1D domain, and $\delta(x-z)$ is the Dirac delta function representing a point source of mRNA at location $z$.
The parameter $\lambda$ is the dimensionless birth rate (transcription) of mRNA at a specific gene site $z$, while $\mu$ is a degradation rate, and the boundary condition describes export across the nuclear boundary.

Given $M$ snapshots of mRNA locations $\{q_i^j\}$ for $j = 1, \ldots, M$, $i = 1, \ldots, N_j$, where $N_j$ is the number of particles in snapshot $j$, the goal is to infer the kinetic parameters $\lambda$ and $\mu$. Since the particle locations follow a Poisson point process, we can construct a likelihood function directly from their positions. The resulting log-likelihood is \cite{milesInferringStochasticRates2024}:
\begin{equation*}
  l(\lambda,\mu; q_i^j) =  M \int_\Omega u(x; \lambda, \mu) dx - \sum_{j=1}^M \sum_{i=1}^{N_j} \log u(q_i^j; \lambda, \mu).
\end{equation*}
This particular likelihood function, which is standard for spatial Poisson processes, consists of two terms: the sum of the log-intensity at each observed particle's location and a term penalizing the total expected number of particles in the domain, $\int_\Omega u(x)dx$. 
The solution to the PDE is continuous, but its derivative is discontinuous at the source $z$; we handle this singularity by adapting the cusp-capturing PINN \cite{tsengCuspcapturingPINNElliptic2023}. The details are explained in the SM.

In Fig.\ref{f:pp}(a), we show the 90\% highest-density posterior region (HDPR) and the MAP estimates of $\lambda$ and $\mu$ obtained using the reference method, B-BiLO with LoRA (rank 4), and B-BiLO with Full FT.
Both fine-tuning strategies closely match the reference, with relative errors in the MAP estimates and HDPR area below 4\% (details in SM).
The slight overestimation of $\lambda_{\rm MAP}$ and $\mu_{\rm MAP}$ likely arises from the inexact PDE solution, while the small discrepancies between Full FT and LoRA are attributable to variability in finite MCMC sampling.
The inset shows the HDPR and MAP of the solution $u(x)$, which has a cusp at the source location $z=0.5$ and is accurately captured by B-BiLO.
In Fig.~\ref{f:pp}(b), we show the ground truth (GT), prior distributions, and posterior distributions of $\lambda$ and $\mu$.

\begin{figure}[!htb]
  \centering
  \includegraphics[keepaspectratio=true,width=\textwidth]{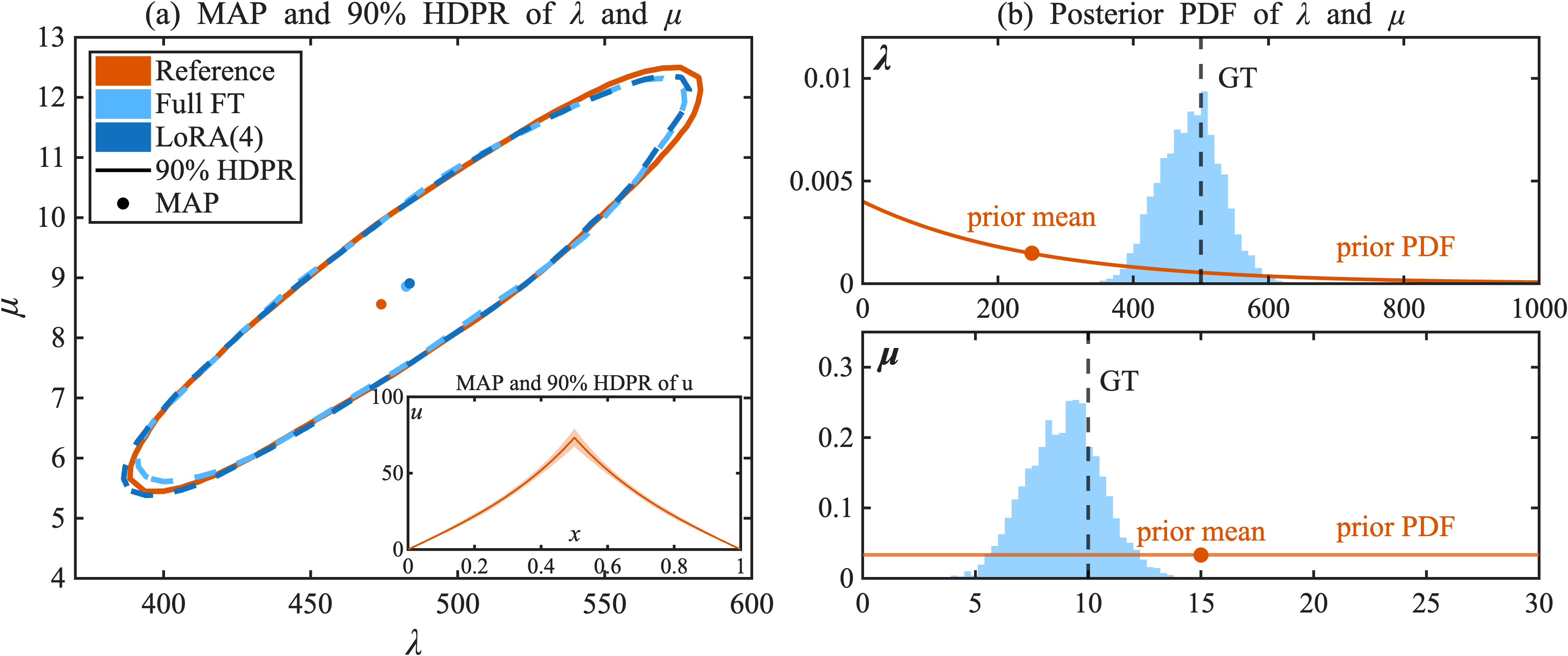}
  \caption{   
      Inference of gene expression parameters $\lambda$ (birth rate) and $\mu$ (degradation rate) using B-BiLO. 
      (a): The 90\% highest-density posterior region (HDPR) and maximum a posteriori (MAP) estimates of the solution obtained using B-BiLO with LoRA rank 4, Full FT, and the reference method (MH with numerical solver).
      Inset: 90\% HDPR and MAP of the solution $u(x)$.
      (b): Ground truth (GT), prior, and posterior distributions for $\lambda$ and $\mu$, with posterior sampled via B-BiLO using LoRA rank 4. 
      }
    \label{f:pp}
\end{figure}

\subsection{Example 4: Darcy Flow Problem}
\label{ss:darcy}
Our framework can be generalized to inferring unknown functions in the PDE, as described in Methods (Sec. \ref{method:learnfun}).
We consider the following 2D Darcy flow problem on the unit square $\Omega = [0,1]\times[0,1]$ with Dirichlet boundary conditions:
\begin{equation}
    -\nabla \cdot (D(\bx) \nabla u) = f(\bx) \quad \text{in } \Omega
\end{equation}
where $f(\bx) = 10$. 
The unknown spatially varying diffusion coefficient $D(\bx)$ is represented by a truncated 64-term Karhunen–Loève expansion with standard normal priors on the coefficients, modeling high- and low-diffusivity regions (3 to 12) across the domain (Details are provided in the SM) \cite{huangEfficientDerivativefreeBayesian2022}.

The reference result is obtained by MH sampling with numerical solutions computed on a 61$\times$61 grid. Noisy observations are collected at $21 \times 21$ evenly spaced grid points, and the residual loss is evaluated at $61 \times 61$ collocation points. The noise level is set to $\sigma_d = 0.01$, corresponding to approximately 10\% of the maximum value of the solution $u$.
Figure~\ref{f:darcy2d_sig} illustrates the posterior inference of the spatially varying diffusion coefficient $D(\bx)$ in the 2D Darcy flow problem. Panel (a) shows the ground truth (GT) diffusion coefficient. Panel (b) presents the point-wise posterior means of $D(\bx)$ obtained using B-BiLO with LoRA rank 4, which captures the key spatial features of the GT. Panel (c) quantifies the uncertainty in $D(x)$ along a fixed 1D slice of the domain, indicated by the dashed line in (a), showing the region of mean $\pm$ two standard deviations.
The relative L2 error of the B-BiLO posterior mean with respect to the reference is 6.2\%. Additional visualizations are provided in the SM.
\begin{figure}[!htb]
  \centering
  \includegraphics[keepaspectratio=true, width=\textwidth]{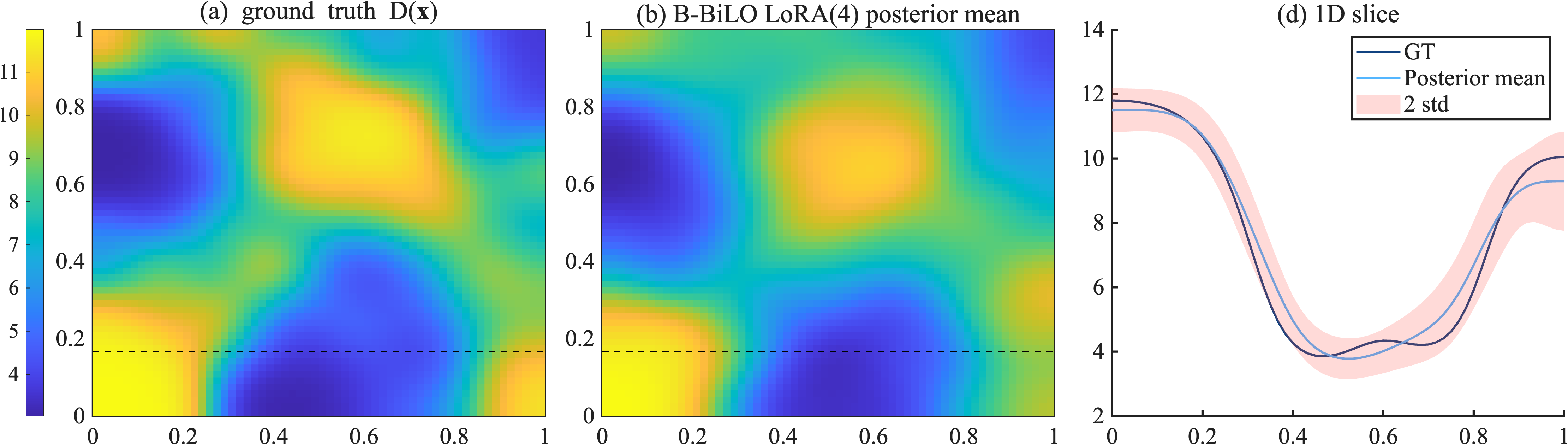}
  \caption{
Posterior inference of the spatially varying diffusion coefficient $D(\bx)$ in the 2D Darcy flow problem. 
(a) The ground truth $D(\bx)$;
(b) The posterior mean of $D(\bx)$ obtained using B-BiLO with LoRA rank 4;
(c) Mean $\pm$ 2 standard deviations of the posterior $D(\bx)$ evaluated along a 1D slice of the domain (dashed line in (a) and (b)). 
}
  \label{f:darcy2d_sig}
\end{figure}

\section{Discussion}
\label{discussion}

In this paper, we have introduced a novel framework (B-BiLO) for efficiently solving Bayesian PDE inverse problems.
Our approach enforces strong PDE constraints and integrates gradient-based MCMC methods, significantly enhancing computational efficiency and accuracy in parameter inference and uncertainty quantification. 
Through numerical experiments, we demonstrated that our method provides superior performance compared to Bayesian Physics-Informed Neural Networks (BPINNs).
Notably, our approach avoids the computational challenges of directly sampling high-dimensional Bayesian neural network weights and mitigates issues arising from ill-conditioned PDE residual terms.
Moreover, by incorporating LoRA, we substantially reduce computational overhead during the fine-tuning phase without compromising the accuracy or robustness of Bayesian inference.
Our results on various problems  highlight the effectiveness of our method.
A promising direction for future work is extending the B-BiLO framework to three-dimensional PDEs and inverse problems with high-dimensional parameter spaces, such as full-field identification problems in geophysics or medical imaging. Tackling these challenges will likely require integration with more efficient sampling algorithms, improved neural network architectures tailored to complex PDEs, and faster fine-tuning strategies to maintain scalability.

\section{Methods}
\label{method}


\subsection{Architecture}
\label{method:arch}
The local operator $u(\bx,\theta; W)$ is represented by a standard multi-layer perceptron (MLP). The network input is the concatenation of the coordinate and the PDE parameter, $h^{(0)} = [\bx;\theta] \in \mathbb{R}^{d+m}$. The hidden layers are defined by $h^{(l)} = \sigma(W^{(l)} h^{(l-1)} + b^{(l)})$ for $l = 1,\dots,L$, where $W^{(l)} \in \mathbb{R}^{p \times d_{l-1}}$ and $b^{(l)} \in \mathbb{R}^{p}$, with $d_{0} = d+m$ and $d_{l} = p$ for all hidden layers, and $\sigma$ is the activation function ($\tanh$ in this work).
The network output is $m(\bx,\theta; W) = W^{(L+1)} h^{(L)} + b^{(L+1)}$, where $W^{(L+1)} \in \mathbb{R}^{1 \times p}$ and $b^{(L+1)} \in \mathbb{R}$.
The collection of all trainable parameters is $W = \{ W^{(l)}, b^{(l)} \}_{l=0}^{L+1}$.
A final transformation can be applied to the output of the MLP to enforce the boundary condition or the initial condition \cite{dongMethodRepresentingPeriodic2021,sukumarExactImpositionBoundary2022}. 
For example, if the PDE is defined on a unit interval \([0, 1]\) with Dirichlet boundary conditions \(u(0) = u(1) = 0\), the BiLO solution can be represented as
$u(x, \theta; \wnn) = m(x,\theta;W) (1 - x) x$.
Alternatively, the boundary condition can be imposed by additional loss terms \cite{raissiPhysicsinformedNeuralNetworks2019}.
This MLP architecture is presented as a basic example and to facilitate the explanation of Low-Rank Adaptation in Section~\ref{method:lora}.
More advanced architectures can also be incorporated, including residual connections \cite{heDeepResidualLearning2015}, random Fourier features \cite{wangEigenvectorBiasFourier2021}, and other modified MLP designs \cite{wangUnderstandingMitigatingGradient2021}.

\subsection{Algorithm}
\label{method:algo}
The key component of our method is to compute the gradient of the potential energy $U(\theta)$.
In BiLO, the potential energy depends on the weights of the neural network and is denoted as $U(\theta, W)$.
At the optimal weights $W^*(\theta)$, the local operator $u(\bx, \theta; W^*(\theta))$, the potential energy $U(\theta, W^*(\theta))$, and its gradient $\nabla_\theta U(\theta, W^*(\theta))$ are expected to approximate the exact local operator $u(\bx, \theta)$, the true potential energy $U(\theta)$, and the true gradient $\nabla_\theta U(\theta)$, respectively.
Therefore, each time $\theta$ is updated, we must solve the lower level problem. The complete B-BiLO algorithm is detailed in Algorithm \ref{alg:bilohmc}.

In practice, we solve the lower level problem to some tolerance $\epsilon$ using our subroutine \texttt{LowerLevelIteration} in Algorithm \ref{alg:fullft}.
It is written as a simple gradient descent method, but in practice, we can use more advanced optimization methods such as Adam \cite{kingmaAdamMethodStochastic2017}.
In the HMC, we call the subroutine \texttt{LowerLevelIteration} each time before computing the gradient of the potential energy $\nabla_\theta U(\theta, W)$.

\begin{algorithm}
  \caption{B-BiLO \label{alg:bilohmc}}
  \begin{algorithmic}[1]
  \Require Initial states for $\theta^{(0)}$, time step size $\delta t$, and number of leapfrog steps $L$
  \For{$k = 1, 2, \dots, N$}
      \State Sample $r \sim \mathcal{N}(0, M)$
      \State $(\theta_0, r_0) \gets (\theta^{(k-1)}, r)$
      \State $H_0 \gets H(\theta_0, r_0; W)$
      \State \Call{LowerLevelIteration}{$\theta_{0}, W$}
      \For{$i = 0, 1, \dots, L-1$}
          \State $r_{i+1/2} \gets r_i - \frac{\delta t}{2} \nabla_\theta U(\theta_i, \wnn)$
          \State $\theta_{i+1} \gets \theta_i + \delta t M^{-1} r_{i+1/2}$
          \State \Call{LowerLevelIteration}{$\theta_{i+1},W$}
          \State $r_{i+1} \gets r_{i+1/2} - \frac{\delta t}{2} \nabla_\theta U(\theta_{i+1}, \wnn)$
      \EndFor
      \State $H_L \gets H(\theta_L, r_L; W)$
      \State $\alpha \gets \min\left(1, \exp \left(H_0 - H_L\right)\right)$
      \State with probability $\alpha$, set $\theta^{(k)} \gets \theta_L$; otherwise, set $\theta^{(k)} \gets \theta_0$
  \EndFor
  \end{algorithmic}
\end{algorithm}

\begin{algorithm}
  \caption{\label{alg:fullft}
LowerLevelIteration($\theta$, $W$)}
  \begin{algorithmic}[1]
    \While{$L_{\text{LO}}(\theta, \wnn) > \epsilon$}
      \State $\wnn \gets \wnn - \alpha_{\wnn} \nabla_{\wnn} L_{\text{LO}}(\theta, \wnn)$
    \EndWhile
  \end{algorithmic}
\end{algorithm}

\subsection{Theoretical analysis}
\label{method:theory}
The accuracy of the leapfrog integrator depends on the accuracy of the potential energy gradient, $\nabla_\theta U(\theta, \wnn)$, and the accuracy of the potential posterior distribution depends on the accuracy of the PDE solution.
Both errors are attributed to the inexact solution of the lower-level optimization problem.

The \textbf{ideal optimal weights} $W^*(\theta)$ satisfy the PDE for all $\theta$:
$ \cF[u(\cdot, \theta,W^*(\theta)), \theta]=\bzero$.
Its practical \textbf{approximation} is denoted by $\wbar$, which is obtained by terminating the optimization once the lower-level loss is within a specified tolerance, $\LLO(\theta, \wbar) \le \epsilon$. 
The true gradient of the upper-level objectives is $g_{true}(\theta) = d_\theta U(\theta, W^*(\theta))$.
In BiLO, we compute an approximate gradient $g_a(\theta) = \nabla_\theta U(\theta, \wbar)$, which does not consider the dependency of $\wbar$ on $\theta$.
Under mild assumptions of PDE operator stability, smoothness, and Lipschitz continuity of the solution map, we proved that $\norm{g_a-g_{true}} = O(\epsilon)$ \cite{zhangBiLOBilevelLocal2025a}.

In addition to the error of the approximate gradient, inexact lower-level optimization also introduces a static error in the posterior distribution targeted by the BiLO-HMC sampler. 
Specifically, denote $u^* = u(\cdot, \theta; W^*(\theta))$, and $\ubar = u(\cdot, \theta; \wbar)$.
The practical posterior $\pibar(\theta)$ is based on inexact solution of the PDE $\ubar$, while the ideal posterior $\pi_*(\theta)$ is based on the exact solution $u^*$. In Theorem 1, given in the SM, we quantify this static discrepancy by bounding the Kullback–Leibler divergence:
$$\KLdiv{\pibar}{\pi_*} = O(\epsilon).$$

\subsection{Efficient Sampling with Low-Rank Adaptation (LoRA)}
\label{method:lora}
The neural network weights $W$ are randomly initialized. Denote the initial weights as $W_{\rm rand}$.
Sampling the posterior distribution $P(\theta|D)$ requires an initial guess for the PDE parameters $\theta_0$.
During sampling, BiLO-HMC generates a sequence of PDE parameters $\theta^{(k)}_i$ for $k = 0, 1, 2, \dots, N$ and $i = 0, 1, \dots, L$, which is the $i$-th step of the leapfrog integrator for the $k$-th proposal. 
The lower-level problem needs to be solved to obtain the neural network weights $\wnn(\theta^{(k)}_i)$. 
We separate the training process into two stages: pre-training and fine-tuning. And discuss how to speed up the training process.

\paragraph{Pre-training (Initialization)}
During pre-training, the neural network weights are initialized randomly, and we fix the PDE parameter $\theta^{(0)}$ and minimize the local operator loss $\LLO(\theta^{(0)}, \wnn)$ to obtain the initial weights $\wnn^*(\theta^{(0)})$. 
The pre-training process can be sped up if the numerical solution of the PDE $u$ at $\theta_0$ is available and included as an additional data loss.
\begin{equation}
  \wnn^* = \arg\min_{\wnn}\left( \LLO(\theta_0,\wnn) + \| u(\cdot, \theta_0; \wnn) - u(\cdot, \theta_0) \|^2\right)
\end{equation}
The additional data loss is not mandatory for training the local operator with fixed $\theta_0$. 
However, the additional data loss can speed up the training process \cite{krishnapriyanCharacterizingPossibleFailure2021}, and is computationally inexpensive as we only need one numerical solution corresponding to $\theta_0$.

\paragraph{Fine-tuning (Sampling)} In the \textit{fine-tuning} stage, we train the local operator for the sequence $\theta^{(k)}_i$ for $k = 0, 1, 2, \dots, N$ and $i = 0, 1, \dots, L$, where the changes in the PDE parameters are small.
We do not train from scratch. Instead, we fine-tune the neural network weights $\wnn$ based on the previous weights, which usually converges much faster.


In Algorithm \ref{alg:fullft}, we fine-tune all the weights of the neural network $\wnn$. We call this approach \textbf{full fine-tuning (Full FT)}.
For large neural networks, this can still be computationally expensive. 
In the next section, we describe how to use the Low-Rank Adaptation (LoRA) technique \cite{huLoRALowRankAdaptation2021} to reduce the number of trainable parameters and speed up the fine-tuning process.

\paragraph{Efficient Fine-tuning with Low Rank Adaptation (LoRA)}
Low-Rank Adaptation (LoRA) is a technique initially developed for large language models \cite{huLoRALowRankAdaptation2021}.
In LoRA, instead of updating the full weight matrix, we learn an low-rank modification to the weight matrix. Below we describe the details of LoRA in the context of BiLO, following the notations used in Sec. \ref{method:arch}.

Suppose the initial guess of the PDE parameter is $\theta^{(0)}$. After pre-training, let ${\mathbf{W}}^{(l)}_0 \in \mathbb{R}^{p \times p}$ be the weight matrix of the $l$-th layer of the neural network after pre-training, and $W_0$ denote the collection of all weight matrices of the neural network after pre-training at fixed $\theta^{(0)}$.

During sampling, after each step of the leapfrog integrator, we arrive at some different PDE parameters $\theta$. We need to solve the lower level problem again to obtain the new weight matrix $ {\bW^{(l)}}(\theta)$.
Consider the effective update of the weight matrix
$$\Delta \bW^{(l)}(\theta) = \bW^{(l)}(\theta) - \bW^{(l)}_0.$$
In LoRA, it is assumed that the change in weights has a low-rank structure, which can be expressed as a product of two low-rank matrices: 
$$\Delta \bW^{(l)} = \mathbf{A}^{(l)} \mathbf{B}^{(l)},$$ 
where $\mathbf{A}^{(l)} \in \mathbb{R}^{p \times r}$ and $\mathbf{B}^{(l)} \in \mathbb{R}^{r \times p}$ are low-rank matrices with $r \ll p$. 

Denote the collection of low-rank matrices as $W_{LoRA} = \{\mathbf{A}^{(l)}, \mathbf{B}^{(l)}\}_{l=1}^L$. 
Then neural network is represented as
$u(\bx, \theta; W_0 + W_{LoRA})$, where $W_0 + W_{LoRA}$ means that the $l$-th layer weight matrix is given by $\bW^{(l)} = \bW^{(l)}_0 + \mathbf{A}^{(l)} \mathbf{B}^{(l)}$.
This approach aims to reduce the complexity of the model while preserving its ability to learn or adapt effectively.
With LoRA, our lower level problem becomes
\begin{equation}
  W_{LoRA}^*(\theta) = \arg\min_{W_{LoRA}} \LLO(\theta, W_0 + W_{LoRA})
\end{equation}
where $\theta$ are the PDE parameters computed in leapfrog. 
The full FT subroutine (Algorithm \ref{alg:fullft}) can be replaced by the LoRA fine-tuning in Algorithm \ref{alg:loraft}. 
In this work, we use the original LoRA \cite{huLoRALowRankAdaptation2021} as a proof-of-concept and we note that more recent developments of LoRA might further increase efficiency and performance \cite{zhangAdaLoRAAdaptiveBudget2023,wangLoRAGALowRankAdaptation2024,hayouLoRAEfficientLow2024}.

\begin{algorithm}
  \caption{\label{alg:loraft}
LowerLevelIteration-LoRA($\theta$) }
  \begin{algorithmic}[1]
    \Require Initial weights $W_0$ after pre-training, rank $r$, learning rate $\alpha_{W_{LoRA}}$, and stopping tolerance $\epsilon$
    \While{$L_{\text{LO}}(\theta, W_0 + W_{LoRA}) > \epsilon$}
      \State $W_{LoRA} \gets W_{LoRA} - \alpha_{W_{LoRA}} \nabla_{W_{LoRA}} \LLO(\theta, W_0 + W_{LoRA})$
    \EndWhile
    \State \Return $W_0 + W_{LoRA}$
  \end{algorithmic}
\end{algorithm}

\paragraph{Trade-off of LoRA} 
The number of trainable weights $W_{LoRA}$ is much smaller than the full weight matrix $W$, and therefore LoRA reduces the memory requirements significantly at each iteration. However, evaluating the neural network (forward pass) and computing the gradient via back propagation still requires the full weight matrix $W = W_0 + W_{LoRA}$. Using the additional low-rank matrices $\mathbf{A}^{(l)}$ and $\mathbf{B}^{(l)}$ might incur additional computational overhead. Thus, LoRA is more beneficial for large neural networks (in our case, large $p$).

In addition, with LoRA, the optimization problem is solved in a lower dimensional subspace of the full weight space. Therefore, it is expected that more iterations are needed to reach the same level of loss as full-model fine-tuning \cite{chenCELoRAComputationEfficientLoRA2025}. 
On the other hand, optimizing over a subspace can serve as a regularization \cite{bidermanLoRALearnsLess2024}.
In numerical experiments, we show that as the model size increases, the benefit outweighs the cost, and LoRA can significantly speed up the fine-tuning process.

\subsection{Inferring Unknown Functions}
\label{method:learnfun}
We describe in general how to infer unknown functions in the BiLO framework \cite{zhangBiLOBilevelLocal2025a}.
Suppose the PDE depends on some unknown functions $f(\bx)$, such as the spatially varying diffusivity in a diffusion equation. Denote the PDE as
\begin{equation}
  F(D^ku(\bx),...,D u(\bx), u(\bx), f(\bx)) = \bzero
\end{equation}
We introduce an auxiliary variable $z = f(\bx)$, and we aim to find a local operator $u(\bx, z)$ such that $u(\bx, f(\bx))$ solves the PDE locally at $f$. 
We introduce the augmented residual function, which has an auxiliary variable $z$:
\begin{equation}
  a(\bx, z) := F(D^ku(\bx,z),...,D u(\bx,z), u(\bx,z), z).
\end{equation}
And the conditions for a local operator are:
\begin{enumerate}[leftmargin=*, itemsep=0pt]
  \item $a(\bx, f(\bx)) = \bzero$
  \item $\grad_{z} a(\bx, z)|_{z=f(\bx)} = d_z \cF(D^ku(\bx,z),...,D u(\bx,z), u(\bx,z), z)|_{z=f(\bx)} = \bzero$.
\end{enumerate}
Condition 1 states that the function $u(\bx, f(\bx))$ has zero residual, and condition 2 means that small perturbations of $f(\bx)$ result in small changes in the residual,
e.g., the PDE is approximated to second order in the perturbation size (see also \cite{zhangBiLOBilevelLocal2025a}).

The unknown function $f$ can be represented by a Bayesian neural network  \cite{nealBayesianLearningNeural1996,yangBPINNsBayesianPhysicsinformed2021} or a Karhunen-Loève (KL) expansion \cite{rasmussenGaussianProcessesMachine, huangEfficientDerivativefreeBayesian2022,liFourierNeuralOperator2024}. We denote the parameterized function by $f(\bx,\theta)$, where $\theta$ are the weights of the Bayesian neural network or the coefficients of the KL expansion. The Local Operator loss in this case is
\begin{equation}
  \LLO(\theta, \wnn) = \norm{a(\cdot,f(\cdot,\theta),\wnn)}^2_2 + \wdr \norm{\nabla_{z} a(\cdot, f(\cdot,\theta), \wnn)}^2_2,
  \label{eq:llo_fun}
\end{equation}
and the same bi-level Bayesian inference problem \eqref{eq:bi_bayes} is solved.

\backmatter

\bmhead{Code Availability}
The code for the numerical experiments is available at \url{https://github.com/Rayzhangzirui/BiLO}.

\bmhead{Acknowledgments}
R.Z.Z and J.S.L thank Babak Shahbaba for the GPU resources. 
R.Z.Z. thanks the NVIDIA Academic Grant Program for the NVIDIA RTX PRO 6000 Blackwell GPU.
J.S.L acknowledges partial support from the National Science Foundation through grants DMS-2309800, DMS-1953410 and DMS-1763272 and the Simons Foundation (594598QN) for an NSF-Simons Center for Multiscale Cell Fate Research. 
C.E.M was partially supported by a NSF CAREER grant DMS-2339241.



\bibliography{bilolora}

\end{document}


\maketitle

\begin{appendices}

\section{BiLO for PDE-Constrained Optimization}
\label{ap:part1}
For completeness, we briefly review the BiLO framework for solving PDE-constrained optimization problems developed in \cite{zhangBiLOBilevelLocal2025a}.
We consider the following PDE-constrained optimization problem:
\begin{equation}
  \begin{aligned}
    &\min_{\theta} \quad \lVert u -  \hu\lVert^2_2 \\
    &\textrm{s.t.} \quad F(\cD^ku(\bx),...,\cD u(\bx), u(\bx), \theta) = \mathbf{0}\\
  \end{aligned}
  \label{eq:opt_scalar}
\end{equation}
where $\hat{u}$ is the observed data.

Using the same definition of the local operator and local operator loss as in the main text, we solve the following bi-level optimization problem:
\begin{equation}
  \label{eq:bilo_scalar}
  \begin{cases}
    \theta^* = \arg\min_{\theta} \ldat(\theta, \wnn^*(\theta)) \\
    \wnn^*(\theta) = \arg\min_{\wnn} \lopt(\theta, \wnn) \\
  \end{cases}
\end{equation}
In the upper level problem, we find the optimal PDE parameters $\theta$ by minimizing the data loss 
$ \ldat(\theta, \wnn) = \lVert u(\cdot, \theta;W) -  \hu\lVert^2_2$
with respect to $\theta$.
In the lower level problem, we train a network to approximate the local operator $u(\bx, \theta; \wnn)$ by minimizing the local operator loss with respect to the weights of the neural network.

The bi-level optimization problem \eqref{eq:bilo_scalar} can be solved by simultaneous gradient descent at both levels:
\begin{equation*}
      \label{eq:bilogd}
      \begin{cases}
        \theta^{k+1} = \theta^{k} - \alpha_{\theta} \grad_{\theta} \ldat(\theta^{k}, \wnn^{k})\\
        \wnn^{k+1} = \wnn^{k} - \alpha_{\wnn} \grad_{\wnn} \lopt(\theta^{k}, \wnn^{k})
      \end{cases}
\end{equation*}
Theoretically, we show that when the lower-level problem is solved exactly, the gradient of the upper-level loss is exact. When the lower-level problem is solved to a tolerance $\epsilon$, the error between the approximate and exact upper-level gradient is of order $\epsilon$ (see Theorems 1 and 2 in \cite{zhangBiLOBilevelLocal2025a}).

In \cite{zhangBiLOBilevelLocal2025a}, the effectiveness of the method is demonstrated on a variety of PDEs, including inferring the diffusion and proliferation rate of Fisher-KPP equation, inferring stochastic rate from particle data (elliptic PDE with singular forcing), inferring the initial condition of a heat equation and inviscid Burgers' equation, inferring the spatially varying diffusion coefficient in a 2D Darcy flow problem, and a Glioblastoma inverse problem using patient data.
The method is compared to PINN, Neural Operator, and adjoint methods. Overall, we showed that BiLO is robust to sparse and noisy data, and eliminates the need to balance the residual and the data loss \cite{zhangBiLOBilevelLocal2025a}. 

\section{Theoretical Analysis}

We first briefly review the theoretical analysis of BiLO on the error of the upper-level gradient, introduced by the inexact minimization of the lower-level problem, which is presented in \cite{zhangBiLOBilevelLocal2025a}.

\paragraph{Setup}
Consider a bounded domain $\Omega \subset \mathbb{R}^d$. 
The PDE parameter $\theta \in \Theta \subset \mathbb{R}^m$.
The weights $W \in \mathbb{R}^n$.
The PDE operator $\cF: H^1_0(\Omega) \times \Theta \to H^{-1}(\Omega)$.
The PDE solution map (parameterized by weights $W$)
$u: \Theta \times \mathbb{R}^n \to H^1_0(\Omega)$. 
The potential energy has the form $\cU[u, \theta] = \ell[u] -\log P(\theta)$, where $\ell[u]$ is the data-fit term and is a functional on the solution of the PDE, and $P(\theta)$ is the prior distribution of $\theta$.
For simplicity, we also sometimes write $U(\theta, W)= U[u(\theta, W),\theta]$.
The residual as a function of $\theta$ and $W$ is defined as
\begin{equation}
  r(\theta,W) := \cF(u(\theta,W), \theta)
\end{equation}
Denote the (partial) Fréchet derivative of $\cF$ by $\cF_u$ and $\cF_\theta$. 
The residual-gradient is given by
\begin{equation}
  \nabla_\theta r(\theta,W):= \cF_u(u(\theta,W), \theta)(\gradW u(\theta,W)) + \cF_\theta(u(\theta,W), \theta)
\end{equation}

The \textbf{ideal optimal weights} $W^*(\theta)$ satisfies the PDE for all $\theta$:
$$ \cF(u(\theta,W^*(\theta)), \theta)=0.$$
Its practical \textbf{approximation} is denoted by $\wbar$, which is obtained by terminating the optimization once the lower-level loss is within a specified tolerance, 
$$\LLO(\theta, \wbar) \le \epsilon.$$ 
We also denote $u^* = u(\theta, W^*)$ the solution at the ideal optimal weights $W^*$, $\bar{u} = u(\theta, \wbar)$ the solution at the approximate weights $\wbar$.
We list the assumptions for the theoretical analysis, which are similar to those in  \cite{zhangBiLOBilevelLocal2025a}.
\begin{assumption}[Assumptions for Hypergradient Analysis] \label{assump:unified_new}
Consider a parameterized PDE $\mathcal{F}(u, \theta) = 0$ on a bounded domain $\Omega$ with $\theta \in \mathbb{R}^m$.
\begin{itemize}
    \item[(i)] \textbf{Inexact Minimization:} The lower-level optimization for the weights $\wbar$ terminates when the total local operator loss is within a tolerance $\epsilon$:
    $$\LLO(\theta, \wbar) = \norm{r(\theta, \wbar)}_{L^2}^2 + w_{\rm rgrad}\norm{\grad_{\theta} r(\theta, \wbar)}_{H^{-1}}^2 \le \epsilon.$$
    Since $\wdr$ is some fixed weight, without loss of generality, we can assume that both the residual loss and the residual-gradient loss are controlled by $\epsilon$.
    \item[(ii)] \textbf{PDE Operator Properties:} The operator $F(u, \theta)$ are sufficiently Fréchet differentiable and stable, that is, if $\cF(u,\theta) = 0$ and $\norm{\cF(v, \theta)}\leq \varepsilon$, then $\norm{v-u}\leq C \varepsilon$ for some constant $C$. 
    The linearized operator at the $u$, denoted $\bL_{u}[\cdot] := F_u(u^*, \theta)[\cdot]$, is stable, that is, if
    $\bL_{u}[v] = f$, then $\norm{v} \leq C \norm{f}$ for some constant $C$.
    \item[(iii)] \textbf{Smoothness:} The data fitting term $\ell$ is Lipschitz continuous. The solution $u$ is Lipschitz continuous in the weights $W$ and the parameters $\theta$, and has bounded derivatives with respect to $\theta$.
\end{itemize}
\end{assumption}

The approximate gradient of the potential energy is given by
\[g_a(\theta) = \nabla_\theta U(\theta,\wbar)\] 
And the true hypergradient is
\[g_{true}(\theta) = d_\theta U(\theta,W^{*}(\theta))\] 
Since the prior $P(\theta)$ is independent of $W$, this difference arises solely from the data-fit term $\ell$.
As a direct consequence of Theorem 2 in \cite{zhangBiLOBilevelLocal2025a}, we have the following result:
\[
\norm{g_a(\theta) - g_{true}(\theta)}= O(\epsilon)
\]


The preceding results bounds the dynamic error in the HMC sampler's gradient.
A separate and more fundamental issue is the static error in the sampler's target distribution.
Because the lower-level problem is solved inexactly, the algorithm targets an approximate posterior $\pibar$, which is computed using the approximate solution $\ubar$, instead of the ideal one $\pi^*$, which requires $u^*$.
The total error in the BiLO-HMC method thus has two distinct components: the dynamic sampler error (order $O(\epsilon)$) and this static target error.
The following theorem isolates and bounds this static error.

\begin{theorem}[Posterior Perturbation Bound]
\label{thm:posterior_perturbation}
Under Assumption \ref{assump:unified_new}, the KL divergence between the practical target posterior $\pibar(\theta) \propto \exp(-U(\theta, \wbar(\theta)))$ and the ideal posterior $\pi^*(\theta) \propto \exp(-U(\theta, W^*(\theta)))$ is order $O(\epsilon)$:
$$\KLdiv{\pibar}{\pi^*} = O(\epsilon)$$
\end{theorem}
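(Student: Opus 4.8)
The plan is to exploit the fact that the Kullback--Leibler divergence between two Gibbs measures whose log-densities differ by a small uniform perturbation is \emph{quadratic} in that perturbation, so that a pointwise potential difference of size $O(\sqrt{\epsilon})$ produces a divergence of size $O(\epsilon)$.

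First I would isolate the potential difference. Writing $\bar U(\theta) = U(\theta,\wbar(\theta))$ and $U_*(\theta) = U(\theta,W^*(\theta))$, and using the decomposition $U[u,\theta] = \cJ[u] - \log P(\theta)$, the prior term cancels, so that $h(\theta) := \bar U(\theta) - U_*(\theta) = \cJ[\ubar] - \cJ[u^*]$. By Lipschitz continuity of the data-fit functional $\cJ$ (Assumption~\ref{assump:unified_new}(iii)), $|h(\theta)| \le L_{\cJ}\,\norm{\ubar - u^*}$. I would then control the solution error: since $\cF(u^*,\theta)=0$ and the residual term of the lower-level loss satisfies $\norm{\cF(\ubar,\theta)}^2 = \norm{r(\theta,\wbar)}_{L^2}^2 \le \epsilon$, the PDE stability in Assumption~\ref{assump:unified_new}(ii) gives $\norm{\ubar - u^*} \le C\,\norm{\cF(\ubar,\theta)} \le C\sqrt{\epsilon}$. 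Combining these yields a \emph{uniform} bound $\|h\|_\infty \le L_{\cJ} C \sqrt{\epsilon} =: D = O(\sqrt{\epsilon})$, valid over the parameter support provided the stability constant $C$ is uniform in $\theta$.

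The heart of the argument is to upgrade this $O(\sqrt{\epsilon})$ pointwise bound into an $O(\epsilon)$ divergence bound via a thermodynamic-integration identity. I would introduce the interpolating family $p_t \propto \exp(-U_* - t h)$ for $t\in[0,1]$, with $p_0 = \pi_*$ and $p_1 = \pibar$, and the log-partition function $\phi(t) = \log\int \exp(-U_*(\theta) - t\,h(\theta))\,d\theta$. Direct differentiation gives $\phi'(t) = -\mathbb{E}_{p_t}[h]$ and $\phi''(t) = \mathrm{Var}_{p_t}(h) \ge 0$. Substituting $\KLdiv{\pibar}{\pi_*} = \mathbb{E}_{\pibar}[-h] + \log(Z_*/\bar Z) = \phi'(1) + \phi(0) - \phi(1)$ and rearranging a Taylor expansion of $\phi$ with integral remainder produces the exact identity
\begin{equation}
  \KLdiv{\pibar}{\pi_*} = \int_0^1 t\,\mathrm{Var}_{p_t}(h)\,dt.
\end{equation}
Because $|h| \le D$ pointwise, each variance obeys $\mathrm{Var}_{p_t}(h) \le \mathbb{E}_{p_t}[h^2] \le D^2$, whence $\KLdiv{\pibar}{\pi_*} \le \tfrac12 D^2 = O(\epsilon)$, as claimed.

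The main obstacle is precisely this last conversion, and it is the reason a naive estimate fails: bounding either $\mathbb{E}_{\pibar}[h]$ or $\log(Z_*/\bar Z)$ in isolation yields only $O(\sqrt{\epsilon})$, and the sharper rate appears only after the leading first-order contributions of these two terms cancel. The variance-integral identity above makes this cancellation explicit by exhibiting the divergence as a genuinely second-order quantity in $h$; once it is in hand, the remaining estimates reduce to the uniform bound on $h$ already established. The only delicate point to verify is that the stability constant and the Lipschitz constant $L_{\cJ}$ can be taken uniform over the effective support of both posteriors, so that $\|h\|_\infty$ is genuinely of order $\sqrt{\epsilon}$ rather than merely finite at each sampled $\theta$.
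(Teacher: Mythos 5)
Your proof is correct, but it takes a genuinely different route from the paper's. Both arguments begin the same way: the prior cancels, Lipschitz continuity of $\cJ$ plus PDE stability bound the potential perturbation $h(\theta)=U(\theta,\wbar(\theta))-U(\theta,W^*(\theta))$ uniformly in $\theta$, and the KL divergence is decomposed as $\log(Z_*/\bar Z)-\mathbb{E}_{\pibar}[h]$. The paper then stops at a first-order estimate: it bounds $|\log(Z_*/\bar Z)|\le \|h\|_\infty$ by Jensen's inequality and concludes $\KLdiv{\pibar}{\pi_*}\le 2\|h\|_\infty$. You instead exhibit the divergence as the genuinely second-order quantity $\int_0^1 t\,\mathrm{Var}_{p_t}(h)\,dt\le \tfrac12\|h\|_\infty^2$ via the interpolating log-partition function. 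This difference matters for the advertised rate: Assumption A.1(i) states $\LLO(\theta,\wbar)=\|r\|^2+\wdr\|\nabla_\theta r\|^2\le\epsilon$, so stability literally yields $\|\ubar-u^*\|\le C\sqrt{\epsilon}$ and hence $\|h\|_\infty=O(\sqrt{\epsilon})$; the paper's linear bound then gives only $O(\sqrt{\epsilon})$ unless one reads the tolerance as controlling the residual norm itself rather than its square (which is how the paper's Step~1 asserts $\|\ubar-u^*\|=O(\epsilon)$). Your quadratic bound delivers the stated $O(\epsilon)$ under the weaker, literal reading of the assumption, at the modest extra cost of the thermodynamic-integration identity, whose validity here needs only that $h$ is bounded so that every $p_t$ is normalizable. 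Your closing caveat about uniformity of the stability and Lipschitz constants over the parameter set is exactly the implicit assumption the paper also makes when taking $\sup_{\theta\in\Theta}$.
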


\begin{proof}
\textbf{Step 1: Bounding the potential difference.} Let 
$$\Delta U(\theta) = U(\theta, \wbar(\theta)) - U(\theta, W^*(\theta)).$$
Since the prior $P(\theta)$ is independent of $W$, this difference arises solely from the data-fit term $\ell$.
Using the Lipschitz properties of $\ell$ and the stability of the PDE,
\begin{align*}
    |\Delta U|_\infty = \sup_{\theta\in\Theta} |\Delta U(\theta)| &\leq K \norm{\ubar-u^*} = O(\epsilon)
\end{align*}
\textbf{Step 2: Bounding the KL divergence.} The KL divergence is defined as
$$\KLdiv{\pibar}{\pi^*} = \mathbb{E}_{\pibar}[ \log(\pibar/\pi^*) ] = \log(Z^*/\bar{Z}) - \mathbb{E}_{\pibar}[\Delta U],$$
where $Z^*$ and $\bar{Z}$ denote the normalizing constants of $\pi^*$ and $\bar{\pi}$, respectively.
The resulting log-ratio is bounded by 
$$|\log(Z^*/\bar{Z})|
= -|\log(\mathbb{E}_{\pi^*}[\exp(-\Delta U)])| \le |\Delta U|_\infty,$$ which follows from Jensen's inequality.
Therefore:
\begin{align*} \KLdiv{\pibar}{\pi^*} &\le |\log(Z^*/\bar{Z})| + |\mathbb{E}_{\pibar}[\Delta U]|
\\& \le 2|\Delta U|_\infty \\
& = O(\epsilon)
\end{align*}
\end{proof}

\section{Difference from BPINN}
\label{method:bpinn}
We briefly review the Bayesian PINN (BPINN) framework \cite{yangBPINNsBayesianPhysicsinformed2021} and highlight the differences with BiLO.

\paragraph{Review of BPINN}
Within the BPINN framework, the solution of the PDE is represented by a Bayesian neural network $\upinn(\bx, W)$, 
where $W$ denotes the weights of the neural network that are taken to be random variables. 
Note that $\theta$ is not an input to the neural network.
For the Bayesian Neural Network, the prior distribution of the weights, $P(W)$, is usually assumed to be an i.i.d. Normal distribution.

The likelihood of the observation data $D_u = \{(\bx_u^{(i)}, \hu^{(i)})\}_{i=1}^{N_u}$, where $\hu^{(i)} = u(\bx_u^{(i)}) + \eta_d$, $\eta_d \sim \mathcal{N}(0, \sigma_d^2)$, is given by
\begin{equation}
  P(D_u|W) = \prod_{i=1}^{N_u} \frac{1}{\sqrt{2\pi \sigma_d^2}} \exp\left( -\frac{1}{2\sigma_d^2} \left| \upinn(\bx_u^{(i)},W) - \hu^{(i)}\right|^2 \right)
\end{equation}
Notice that the likelihood of the observation data $D_u$ only depends on the neural network weights $W$, and does not depend on the PDE parameters $\theta$ as BiLO does.

In BPINN, we denote the PDE as $\mathcal{G}(u(\bx),\theta) = f(\bx)$, where $\mathcal{G}$ is the differential operator and $f$ is the forcing term.
Given noisy measurements of the forcing term  $D_f = \{(\bx_f^{(i)}, \hat{f}^{(i)})\}_{i=1}^{N_f}$, where
$\hat{f}^{(i)} = f(\bx_f^{(i)}) + \eta_f$ and $\eta_f \sim \mathcal{N}(0, \sigma_f^2)$, then
\begin{equation}
  P(D_f|W, \theta) = \prod_{i=1}^{N_f} \frac{1}{\sqrt{2\pi \sigma_f^2}} \exp\left( -\frac{1}{2\sigma_f^2} \left| \mathcal{G}(\upinn(\bx_f^{(i)}; W), \theta) -\hat{f}^{(i)}\right|^2 \right)
  \label{eq:residual}
\end{equation}
Assuming $\theta$ and $W$ are independent, the following joint posterior is sampled using HMC.
\begin{equation}
  P(W, \theta|D_u, D_f) \propto P(D_u|W) P(D_f|W, \theta) P(W) P(\theta).
\end{equation}

\paragraph{Challenges for BPINNs}
One of the main differences between BPINN and BiLO lies in the treatment of the neural network weights, $\wnn$. 
For PDE inverse problems, the PDE parameters $\theta$ are usually low-dimensional. 
In the BPINN framework, the weights $\wnn$ are treated as random variables that must be sampled from the joint posterior distribution alongside the PDE parameters $\theta$.
However, $\wnn$ is high-dimensional, making sampling challenging.

In contrast, the BiLO framework treats the weights $\wnn$ as deterministic variables. 
For any given parameter $\theta$ in the sampling process, the optimal weights $\wnn^*(\theta)$ are found through a direct optimization of the lower-level problem, as defined in the second equation in Eq. \eqref{eq:bilo_scalar}. 
This approach avoids placing a prior on $\wnn$ and more importantly bypasses the challenge of sampling from the high-dimensional and often complex posterior distribution of the network weights.

Modeling uncertainty in the forcing term $f$ can also be nuanced.
In a BPINN, the solution $u$ is represented by a Bayesian neural network, and uncertainty originates from the prior distribution placed on the network weights, $P(W)$. Uncertainty over the weights $W$ then propagates through the differential operator $\mathcal{G}$ to induce a distribution on the model's estimate of the forcing term, $f=\mathcal{G}(u(\cdot; W), \theta)$. This ``top-down'' uncertainty propagation from the model's prior contrasts with cases where it is more physically meaningful for uncertainty in the forcing term $f$ itself (e.g., from noisy measurements or an explicit prior, $P(f)$) to propagate to the solution $u$.

Another nuance lies in the interpretation of $\sigma_f$ and $D_f$.
When $D_f$ represents noisy physical measurements of the forcing term $f$, 
and $\sigma_f$ is the noise level,
the likelihood $P(D_f|W, \theta)$ allows one to incorporate the PDE.
However, if no such measurements are available, then the physics-informed component of the likelihood vanishes.
On the other hand, if there is no noise in $D_f$, then $\sigma_f$ is 0, and the likelihood becomes singular and cannot be sampled by most sampling methods.
An alternative interpretation is to view $D_f$ and $\sigma_f$ as user-defined constructs. Similar to the residual loss in PINNs, they serve as a soft PDE constraint.
In this case, $\sigma_f$ plays the role of a penalty parameter: smaller values lead to more accurate solution of the PDE.

Regardless of the interpretation, having a small $\sigma_f$ is computationally demanding.
This challenge stems from the stability requirements of HMC, which dictates that the leapfrog step size, $\delta t$, is limited by the inverse square root of the potential energy's maximum curvature~\cite{nealMCMCUsingHamiltonian2011}. In the BPINN framework, this maximum curvature is proportional to $\lambda_{\rm max}/\sigma_f^2$, where $\lambda_{\rm max}$ is the largest eigenvalue of the Hessian of the unscaled residual loss, $\lres$. This unscaled loss is often ill-conditioned, with $\lambda_{\rm max}$ values known to exceed $10^3$~\cite{rathoreChallengesTrainingPINNs2024}. Consequently, applying the HMC stability limit imposes a scaling law on the step size, forcing $\delta t$ to be of order $\cO(\sigma_f/\sqrt{\lambda_{\rm max}})$.
In contrast, the BiLO framework does not require sampling the weights $W$, and thus avoids the stability limit imposed by the Hessian of the residual loss. This allows BiLO to use larger step sizes, leading more distant proposals and more efficient sampling.

\section{Review of Markov Chain Monte Carlo (MCMC) Methods}
\label{ss:hmc}

In this section, we provide a quick overview of the sampling methods that appeared in this work, including the Metropolis-Hastings (MH) algorithm and the Hamiltonian Monte Carlo (HMC). These methods are Markov Chain Monte Carlo (MCMC) methods that generate samples from a target distribution by constructing a Markov chain whose stationary distribution is the target distribution.

We present these methods independent of the BiLO framework. 
This is assuming that we have the parameter-to-solution map $\theta \mapsto u(\cdot, \theta)$, and we can compute the potential energy $U(\theta)$ and the gradient $\nabla_\theta U(\theta)$ accurately. With this assumption, in this section, we temporarily drop the dependence of the potential energy $U$ on the neural network weights $\wnn$. 

\paragraph{Metropolis-Hastings (MH) Algorithm} 
The Metropolis-Hastings (MH) algorithm \cite{hastingsMonteCarloSampling1970} is a classical and simple MCMC method. It requires a proposal distribution $Q(\theta'|\theta)$ to propose new samples $\theta'$ given the current sample $\theta$.
The proposal is then accepted or rejected based on the changes in the potential energy.
Algorithm \ref{alg:mh}, given below, summarizes the MH algorithm for sampling from the potential energy $U(\theta)$.
For simple proposal distributions, such as Gaussian or uniform distributions, MH is easy to implement and does not require computing the gradient of the potential energy. However, it can be inefficient as the acceptance rate may be low \cite{nealMCMCUsingHamiltonian2011}.
In this work, the MH algorithm is coupled with an accurate numerical PDE solver, and serves primarily as a reference method to assess the accuracy of alternative sampling approaches.

\begin{algorithm}
  \caption{Metropolis-Hastings (MH) Algorithm \label{alg:mh}}
  \begin{algorithmic}[1]
  \Require Initial state $\theta^{(0)}$, proposal distribution $Q(\theta'|\theta)$, number of iterations $N$
  \For{$k = 1, 2, \dots, N$}
      \State Sample $\theta' \sim Q(\cdot|\theta^{(k-1)})$
      \State Compute acceptance probability
      \begin{equation*}
        \alpha = \min\left(1, \exp\left[ U(\theta^{(k-1)}) - U(\theta') \right] \cdot \frac{Q(\theta^{(k-1)}|\theta')}{Q(\theta'|\theta^{(k-1)})} \right)
      \end{equation*}
      \State Sample $u \sim U[0, 1]$
      \If{$u < \alpha$}
          \State Accept the proposal: $\theta^{(k)} \gets \theta'$
      \Else
          \State Reject the proposal: $\theta^{(k)} \gets \theta^{(k-1)}$
      \EndIf
  \EndFor
  \end{algorithmic}
\end{algorithm}

\paragraph{Hamiltonian Monte Carlo (HMC)} 
In HMC, an auxiliary momentum variable $\rho$ is introduced, and the Hamiltonian is defined as
\begin{equation}
  H(\theta, \rho) = U(\theta) + K(\rho) = -\log P(\theta|D) + \frac{1}{2} \rho^T M^{-1} \rho,
\end{equation}
where $M$ is the mass matrix, which is symmetric and positive definite. $M$ can be user-defined or adaptively learned in a warmup phase.
HMC samples from the joint distribution $P(\theta, \rho) \propto \exp(-H(\theta, \rho))$ from the Hamiltonian dynamics.
\begin{equation}
    \frac{d\theta}{dt} = M^{-1} \rho, \quad
    \frac{d\rho}{dt} = -\nabla_\theta U(\theta).
\end{equation}
While numerous HMC variants exist, our work employs a standard implementation using the leapfrog integrator with a Metropolis-Hastings correction step.
This can be viewed as an instance of the MH algorithm, where the proposal distribution is defined by the Hamiltonian dynamics. The scheme 
requires computing the gradient of the potential energy.
Using the Hamiltonian dynamics, we can generate distant proposals with high acceptance rates. 
This property helps mitigate the random-walk behavior common in simpler MCMC methods, making HMC a particularly effective and widely-used sampler for exploring complex target distributions \cite{nealMCMCUsingHamiltonian2011}.

\paragraph{Leapfrog Integrator} The Hamiltonian dynamics can be solved using numerical integrators. The most commonly used integrator is the leapfrog method, which is a symplectic integrator that preserves the Hamiltonian structure \cite{nealMCMCUsingHamiltonian2011}. 
We sample a momentum variable $\rho$ from a Gaussian distribution $\mathcal{N}(0, M)$, and then simulate the Hamiltonian dynamics for $L$ steps with a fixed time step size $\delta t$.
For $i = 0, 1, \dots, L-1$:
\begin{equation}
  \begin{aligned}
    \rho_{i+\frac{1}{2}} &= \rho_i - \frac{\delta t}{2} \nabla_\theta U(\theta_i)\\
    \theta_{i+1} &= \theta_i + \delta t M^{-1} \rho_{i+\frac{1}{2}}\\
    \rho_{i+1} &= \rho_{i+\frac{1}{2}} - \frac{\delta t}{2} \nabla_\theta U(\theta_{i+1}),
  \end{aligned}
\end{equation}
Both $\delta t$ and $L$ are hyperparameters that are predefined or adaptively tuned during a warm-up phase.

\paragraph{Metropolis-Hastings Step}
Solving the Hamiltonian dynamics numerically introduces error, which can be corrected using the Metropolis-Hastings step \cite{hastingsMonteCarloSampling1970}. 
Starting from some state $(\theta_0, \rho_0)$, we perform $L$ leapfrog steps to arrive at $(\theta_L, \rho_L)$. 
We then perform a Metropolis-Hastings step: we accept the proposal $\theta_L$ with probability $\alpha$, where
$\alpha = \min\left(1, \exp\left(H(\theta_0, \rho_0) - H(\theta_L, \rho_L)\right)\right)$
If the proposal is accepted, we set $\theta^{(k)} = \theta_L$; otherwise, we set $\theta^{(k)} = \theta^{(k-1)}$. 
The momentum variable $\rho$ is discarded, and is resampled at the start of the next leapfrog step.

\paragraph{HMC Algorithm} 
We summarize the HMC algorithm in Algorithm \ref{alg:hmc}. $\theta^{(k)}$ denotes the $k$-th sample, while $\theta^{(k)}_i$ denotes the PDE parameters at the $i$-th step of the leapfrog integrator for the $k$-th proposal.
\begin{algorithm}
  \caption{Hamiltonian Monte-Carlo with Leapfrog Integrator \label{alg:hmc}}
  \begin{algorithmic}[1]
  \Require Initial states for $\theta^{(0)}$ and time step size $\delta t$
  \For{$k = 0, 1, 2, \dots, N$}
      \State Sample $r \sim \mathcal{N}(0, M)$
      \State $(\theta_0, r_0) \gets (\theta^{(k)}, r)$
      \For{$i = 0, 1, \dots, L-1$}
          \State $r_{i+1/2} \gets r_i - \frac{\delta t}{2} \nabla_\theta U(\theta_i)$
          \State $\theta_{i+1} \gets \theta_i + \delta t M^{-1} r_{i+1/2}$
          \State $r_{i+1} \gets r_{i+1/2} - \frac{\delta t}{2} \nabla_\theta U(\theta_{i+1})$
      \EndFor
      \State $\alpha \gets \min\left(1, \exp\left(H(\theta_0, r_0) - H(\theta_L, r_L)\right)\right)$
      \State With probability $\alpha$, set $\theta^{(k)} \gets \theta^{(k)}_L$; otherwise, set $\theta^{(k)} \gets \theta^{(k-1)}$
  \EndFor
  \end{algorithmic}
\end{algorithm} 
We note that sampling remains an active area of research. For example, 
many hyperparameters in HMC can be made adaptive. 
Further, there have been many advancements over HMC, such as the No-U-Turn Sampler (NUTS) \cite{hoffmanNoUTurnSamplerAdaptively}, higher order integrator \cite{hernandez-sanchezHigherOrderHamiltonian2021}, or stochastic HMC \cite{chenStochasticGradientHamiltonian2014} and we defer the use of such methods for future work.
For simplicity and for comparison with \cite{yangBPINNsBayesianPhysicsinformed2021},
we take $M$ as the identity matrix, and we use the leapfrog method with fixed step size $\delta t$ and length $L$, which is also one of the most common versions of HMC.

\section{Additional Results}
\label{additional results}

\subsection{Example 1: Comparison with BPINN}
\label{ap:ss:nonlinpoi}

For both B-BILO and BPINN, we use 4-layer fully connected neural network with 128 neurons and $\tanh$ activations. The model is trained using the Adam optimizer with AMSGrad, with a learning rate of $10^{-3}$ for the network weights. For HMC, we fix $L=10$ and cross validate $\delta t$. Residual loss is computed on 51 collocation points, and the noisy data is observed at 6 evenly spaced points in the domain.

Fig~\ref{f:nonlin_examples} shows additional inference results for the nonlinear Poisson problem using B-BiLO and BPINN with different $\sigma_f$.
\begin{figure}[htbp]
  \centering
  \includegraphics[keepaspectratio=true, width=\textwidth]{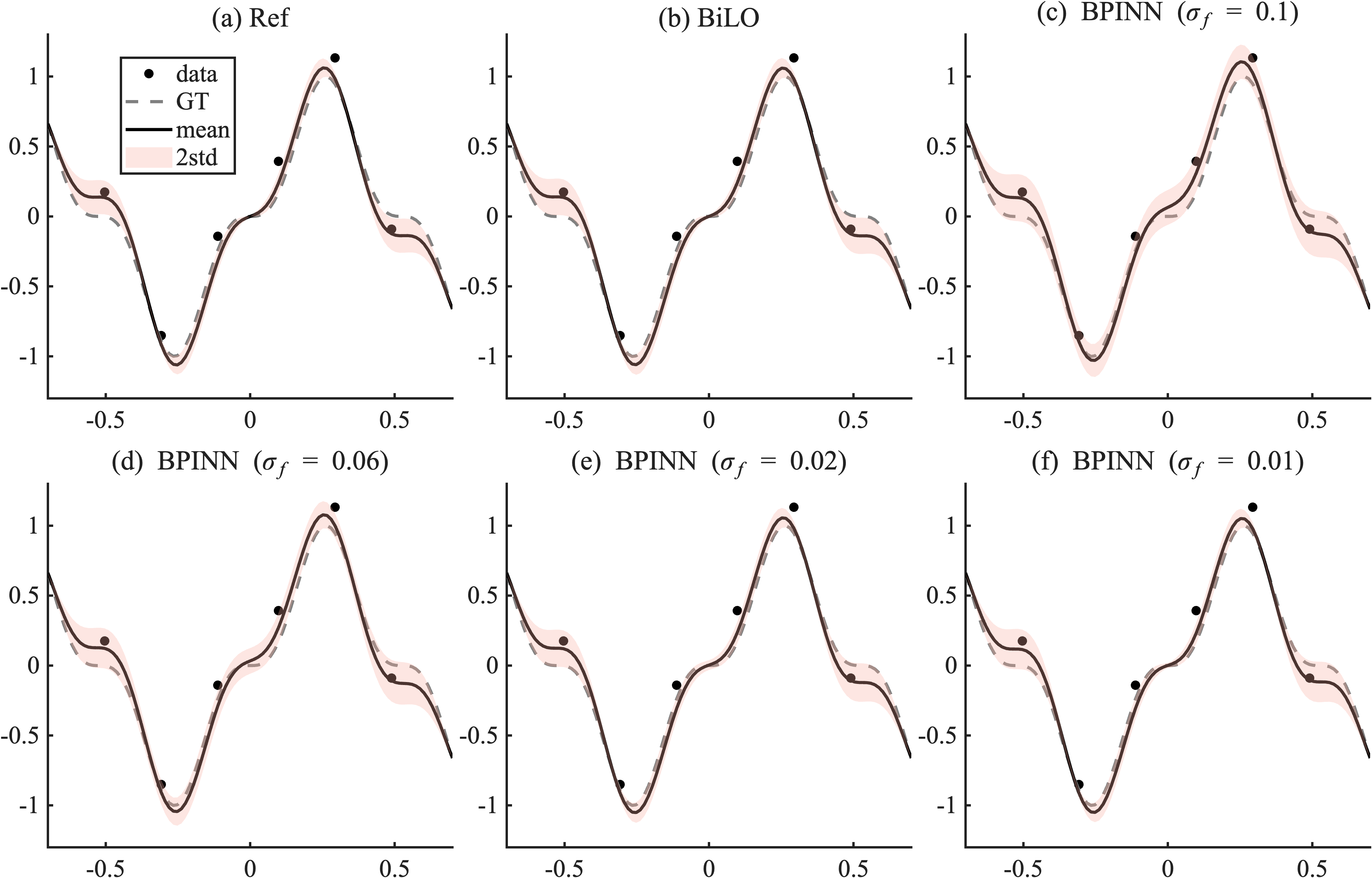}
  \caption{
    Inference results for the nonlinear Poisson problem using B-BiLO and BPINN with different noise levels $\sigma_f$.
Each panel shows the noisy data, ground-truth (GT) solution, and the mean and standard deviation of the inferred solution $u$.
(a) Reference solution obtained by Metropolis–Hastings (MH) sampling with a numerical PDE solver.
(b) B-BiLO.
(c–f) BPINN results for various values of $\sigma_f$.
    }
    \label{f:nonlin_examples}
  \end{figure}
Table \ref{tab:comparison} supplements Figure 2 in the main text, and show the the mean and standard deviation of the inferred parameter $k$ and the standard deviation of $u(0)$ using different methods.
\begin{table}[h!]
  \centering
  \begin{tabular}{lccc}
    \toprule
              & mean($k$)     & std($k$)      &  std($u(0)$) \\ \midrule
    BPINN ($\sigma_f = 0.1$)   & 0.746 & 0.043 & $62.4\times10^{-3}$ \\
    BPINN ($\sigma_f = 0.6$)   & 0.747 & 0.033 & $46.0\times10^{-3}$ \\
    BPINN ($\sigma_f = 0.01$)   & 0.741 & 0.034 & $9.5\times10^{-3}$ \\
    BiLO        & 0.753 & 0.026 & $3.0\times10^{-3}$ \\
    Reference          & 0.754 & 0.025 & $5.0\times10^{-15}$ \\
    \bottomrule
  \end{tabular}
  \caption{Comparison of the mean and standard deviation of $k$ and the standard deviation of $u(0)$ using different methods, the latter of which should be 0.
  }
  \label{tab:comparison}
\end{table}

\subsection{Inferring Patient-Specific Tumor Growth Parameters from MRI Data}

The setup for the glioblastoma (GBM) tumor growth model is detailed in \cite{zhangPersonalizedPredictionsGlioblastoma2025}.
The patient brain geometry is obtained by diffeomorphic registration of an anatomical atlas to the patient’s MRI, yielding tissue-dependent distributions of white and gray matter, with the diffusion coefficient in white matter assumed to be ten times that in gray matter. 
We assume the initial tumor cell density to be $u_0(\bx) = 0.5\exp(-0.1\lVert \bx - \bx_0 \rVert^2)$, where $\bx_0$ is chosen to be the centroid of the TC segmentation.
We assume the predicted segmentations $\by^s(\bx)$, $s\in\{TC,WT\}$, are related to the tumor cell density $u(\bx,1)$ the tumor cell density $u$ at the nondimensional $t=1$ via a (soft) thresholding operation:
$\by^s(\bx) = s(50(u(\bx,1) - u_c^s))$,
where $s(x) = 1/(1+e^{-x})$ is the sigmoid function. 
To handle the complex brain geometry without meshing, we employ the diffuse-domain method, which embeds the brain into a regular computational domain with a smooth boundary representation. 
To address the ill-posedness of parameter estimation from single-time MRI data, the PDE is non-dimensionalized following the approach we developed in \cite{zhangPersonalizedPredictionsGlioblastoma2025}.
The Patient-specific characteristic parameters are obtained by ignoring the complex brain geometry, resulting in a radially symmetric PDE in 1D, and selecting the parameters whose predicted tumor radii best match the segmented ones.
These characteristic parameters are then used to scale and infer personalized diffusion and proliferation parameters from the MRI segmentations. 
The inferred parameters enable personalized prediction of tumor cell density and infiltration patterns, which we validate using recurrence data. 
Compared with the standard uniform-margin clinical target volume, the model-based personalized prediction achieves comparable or improved coverage of the recurrence with reduced irradiation volume, demonstrating its potential for personalized radiotherapy planning \cite{zhangPersonalizedPredictionsGlioblastoma2025,lipkovaPersonalizedRadiotherapyDesign2019,balcerakIndividualizingGliomaRadiotherapy2025}.

Accurately resolving the complex brain geometry and tumor growth dynamics requires a large number of collocation points, which in turn demands substantial GPU memory.
Owing to the limited GPU capacity, we adopt a finite difference discretization of the residual and the residual gradient to reduce memory usage \cite{limPhysicsInformedNeural2022,chiuCANPINNFastPhysicsinformed2022}.
The model is trained using the Adam optimizer with AMSGrad \cite{kingmaAdamMethodStochastic2017,reddiConvergenceAdam2019} with a learning rate of $10^{-3}$ for the network weights.
The lower-level optimization tolerance is set to $10^{-5}$. 
For the upper-level HMC sampler, we use $L = 10$ leapfrog steps with step size $\delta t = 2\times10^{-3}$ and collect $N=2000$ samples.
The solution network is a six-layer modified MLP \cite{wangUnderstandingMitigatingGradient2021} with 256 neurons per layer, augmented with random Fourier features \cite{wangEigenvectorBiasFourier2021}.
This architecture is chosen for improved optimization convergence; however, BiLO is an algorithmic framework and is not restricted to this particular network design.

We first validate our method on synthetic data generated from a known set of parameters $D=0.894$, $\rho=0.860$, $\ucwt=0.3$, $\uctc=0.5$, $\bar{D}=0.17$, $\bar{\rho}=0.31$.
The prior $\uctc\sim$ Uniform(0.1,0.4), $\ucwt\sim$ Uniform(0.4,0.8), $D\sim$logNormal(0.1,0.316), $\rho\sim$logNormal(0.1,0.316).
The geometry is a 2D slice of the brain atlas. 
The results are shown in Fig.~\ref{f:gbm_synthetic}.

\begin{figure}[htbp]
  \centering
  \includegraphics[keepaspectratio=true, width=\textwidth]{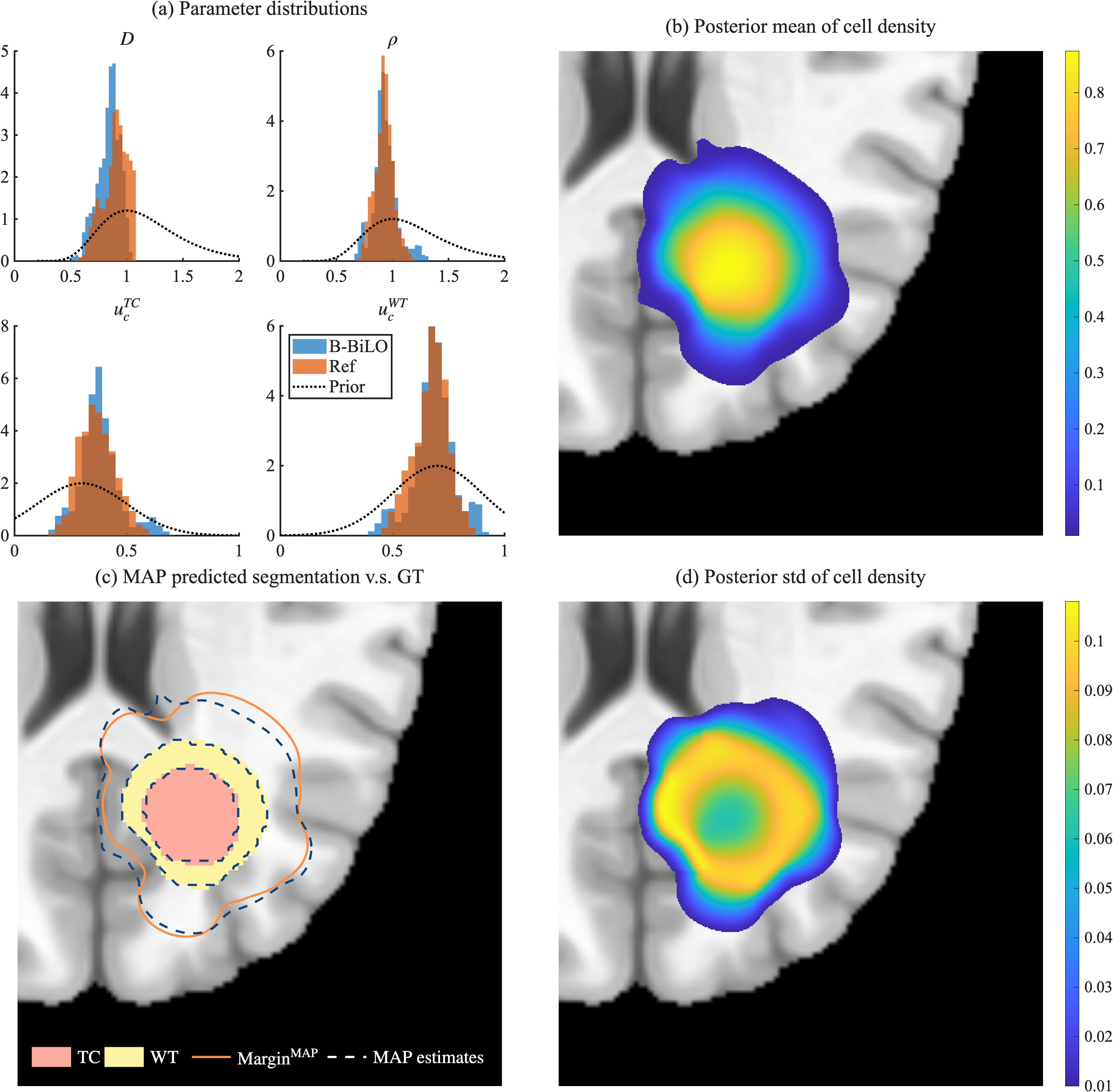}
  \caption{BiLO inference on synthetic tumor.
  (a) The inferred posterior distribution of $D$, $\rho$, $\ucwt$, $\uctc$ and their prior distributions, using the reference method (MH and numerical solver) and B-BILO rank 8. The legends also indicated the posterior mean and standard deviation.
  (b) The posterior mean of the tumor cell density.
  (c) Ground truth WT and TC segmentations (filled) with the infiltration margin (solid orange line), and MAP-predicted segmentations and infiltration margin (dashed lines).
  (d) The posterior standard deviation of the tumor cell density.
  }
  \label{f:gbm_synthetic}
\end{figure}

For the patient case shown in the main text, 
the prior distributions are $\uctc\sim$ Normal$(0.3,0.1^2)$, $\ucwt\sim$ Normal$(0.7,0.1^2)$, $D\sim$logNormal$(0.01,0.1)$, $\rho\sim$logNormal$(0.01,0.1)$.

\subsection{Example 3: Inferring Stochastic Rates from Particle Data}
\label{ss:pointproc}

The PDE in $\Omega = [0,1]$ can be written as a elliptic interface problem:
\begin{equation*} \label{eq:interface}
  \begin{cases}
    \Delta u - \mu u = 0\\
    u(0) = u(1) = 0 \\
    u^+(z) = u^-(z)\\
    u_x^+(z) - u_x^-(z) = -\lambda
  \end{cases}
\end{equation*}
Here, the superscripts $+$ and $-$ indicate the limits from the right and left of the interface point $z$. While the solution $u$ is continuous across this point, its derivative exhibits a jump discontinuity.

To address the singularity in the forcing term, we employ the cusp-capturing PINN method from \cite{tsengCuspcapturingPINNElliptic2023}. 
This technique involves learning a function $\tilde{u}(x,\phi)$ such that $u(x) = \tilde{u}(x,|x-z|)$.
This formulation inherently satisfies the continuity condition. The jump condition is incorporated as an additional constraint within $\cF$: 
$$\partial_\phi \tilde{u}(z,0) = -\lambda.$$ 
The cusp-capturing PINN is parameterized by $\tilde{u}(x, \phi; \wnn)$, and ``jump loss'' is needed to enforce the jump condition:
\begin{equation*}
  \mathcal{L}_{\rm jump}^{PINN}(W) = (\partial_\phi \tilde{u}(z, 0; \wnn) + \lambda)^2
\end{equation*}

In our BiLO framework, we parameterize the local operator as $\tilde{u}(x, \phi, \theta; \wnn)$, with $\theta = (\lambda, \mu)$. 
The corresponding "jump loss" is:
\begin{equation*}
  \mathcal{L}_{\rm jump}(\theta,W) = (\partial_\phi \tilde{u}(z, 0, \theta; \wnn) + \lambda)^2.
\end{equation*}
Additionally, we define a ``jump gradient loss" to satisfy the local operator conditions:
\begin{equation*}
  \mathcal{L}_{\rm jgrad}(\theta, W) = \left\|\nabla_\theta \partial_\phi \tilde{u}(z,0,\theta; \wnn)\right\|^2
\end{equation*}
The lower-level local operator loss is given by ${\mathcal L}_{LO} = \mathcal{L}_{\rm res} + \mathcal{L}_{\rm jump} + \wdr (\mathcal{L}_{\rm rgrad} + \mathcal{L}_{\rm jgrad})$.

For the experiments in this section, we use a 4-layer residual fully connected neural network \cite{heDeepResidualLearning2015} with width 512 and $\tanh$ activations, augmented with random Fourier features \cite{wangEigenvectorBiasFourier2021}. The model is trained using the Adam optimizer \cite{kingmaAdamMethodStochastic2017} with AMSGrad \cite{reddiConvergenceAdam2019}, employing a learning rate of $10^{-4}$ for the network. 
We collect 5,000 HMC samples with $\delta t = 10^{-2}$ and $L$ = 50, yielding ESS of approximately 150.
The exact solution to the PDE is given by \cite{milesInferringStochasticRates2024}:
\begin{equation*}
  u(x) = \frac{\lambda}{\sqrt{\mu}} \csch(\sqrt{\mu})\sinh(\sqrt{\mu} \min\{x,z\})\sinh(\sqrt{\mu} (1 - \max\{x,z\}))
\end{equation*}
Due to the relatively large magnitude of the solution, We set $\epsilon = 10$, which results in a relative error of approximately 1\% in the infinity norm compared to the exact solution.
The local operator loss is evaluated on 101 evenly spaced points in the domain.
In this problem, the decay rate $\mu$ is typically on the order of $10$, while the birth rate $\lambda$ is on the order of several hundreds, as determined by the biological dynamics of the system. This scale ensures that a sufficient number of particles are present for inference.
To improve numerical conditioning during training, we reparameterize $\lambda = 100 \bar{\lambda}$ and learn the rescaled parameter $\bar{\lambda}$. We also represent BiLO as $u(x, \theta; W) = m(x, \theta; W)^2 x(1 - x)$, where $m$ is the raw output of the MLP. This transformation enforces the boundary conditions $u(0) = u(1) = 0$ and ensures $u \ge 0$, which is necessary for evaluating $\log u$ in the likelihood.

In Table \ref{t:comparison}, we show the relative error of the MAP estimate of $\lambda$ and $\mu$, and the area of 90\% HPDR with respect to the reference method.
\begin{table}[!htbp]
  \centering
  \begin{tabular}{cccc}
  \toprule
    & Area & $\lambda_{\rm MAP}$ & $\mu_{\rm MAP}$ \\ 
    \midrule
    Full FT & 3.5\% & 1.8\% & 3.4\% \\ 
    LoRA rank 4 & 2.1\% & 2.0\% & 4.0\% \\ 
  \bottomrule
  \end{tabular}
  \caption{relative error of the MAP estimates of $\lambda$ and $\mu$ and the area of 90\% HPDR with respect to the reference method}
  \label{t:comparison}
\end{table}

\subsection{Darcy Flow Problem}
\label{ap:ss:darcy}
We set the lower-level optimization tolerance to $\epsilon = 0.5$. 
The neural network is a six-layer residual MLP with 512 neurons per layer \cite{heDeepResidualLearning2015}, augmented with random Fourier features \cite{wangEigenvectorBiasFourier2021}.
For the upper-level HMC sampler, we use $L = 100$ leapfrog steps with step size $\delta t = 10^{-3}$ and collect $N=5000$ samples. The BiLO model is represented as
$u(\bx, z; W) = m(\bx, z; W) \bx_1 (1 - \bx_1) \bx_2 (1 - \bx_2)$, 
where $m$ is the raw output of the MLP and $z$ is the auxiliary variable. This form ensures that the Dirichlet boundary conditions are satisfied by construction. 

We use a KL expansion to represent the unknown function $D(\bx)$.
Let $g(\bx, \theta)$ be the 64 term KL expansion of a mean zero Gaussian process with covariance kernel $C = (-\Delta + \tau^2)^{-d}$, 
where $\tau=3$ and $d=2$, and $-\Delta$ is the Laplacian on $\Omega$ subject to homogeneous Neumann boundary conditions \cite{huangEfficientDerivativefreeBayesian2022,liFourierNeuralOperator2024}:
\begin{equation}
  g(\bx,\theta) = \sum_{i=1}^{8} \sum_{j=1}^{8} \theta_{ij} \sqrt{\lambda_{ij}} \cos(i\pi x_1) \cos(j\pi x_2), \quad \lambda_{ij} = (\pi^2 (i^2 + j^2)+\tau^2) ^{-2}
\end{equation}
The prior of the KL coefficients is the standard normal distribution.
In the example, we assume $D(\bx,\theta) = 9s(20g(\bx, \theta)) + 3$, where $s(x) = 1/(1+e^{-x})$ is the sigmoid function. This models a spatially varying diffusion coefficient with low and high diffusivity (3 and 12) in the domain. 
The initial guess for all simulations is $\mathbf{\theta} = \mathbf{0}$.

In Fig.~\ref{f:darcy2d_bayes_vsref}, we compare the results of B-BiLO with LoRA rank 4 and the reference method, which use 100,000 steps of MH sampling with an numerical solver on fine mesh. Key spatial patterns are well captured, though fine-scale details differ due to the inherent randomness of sampling.
\begin{figure}[!htb]
  \centering
  \includegraphics[keepaspectratio=true, width=\textwidth]{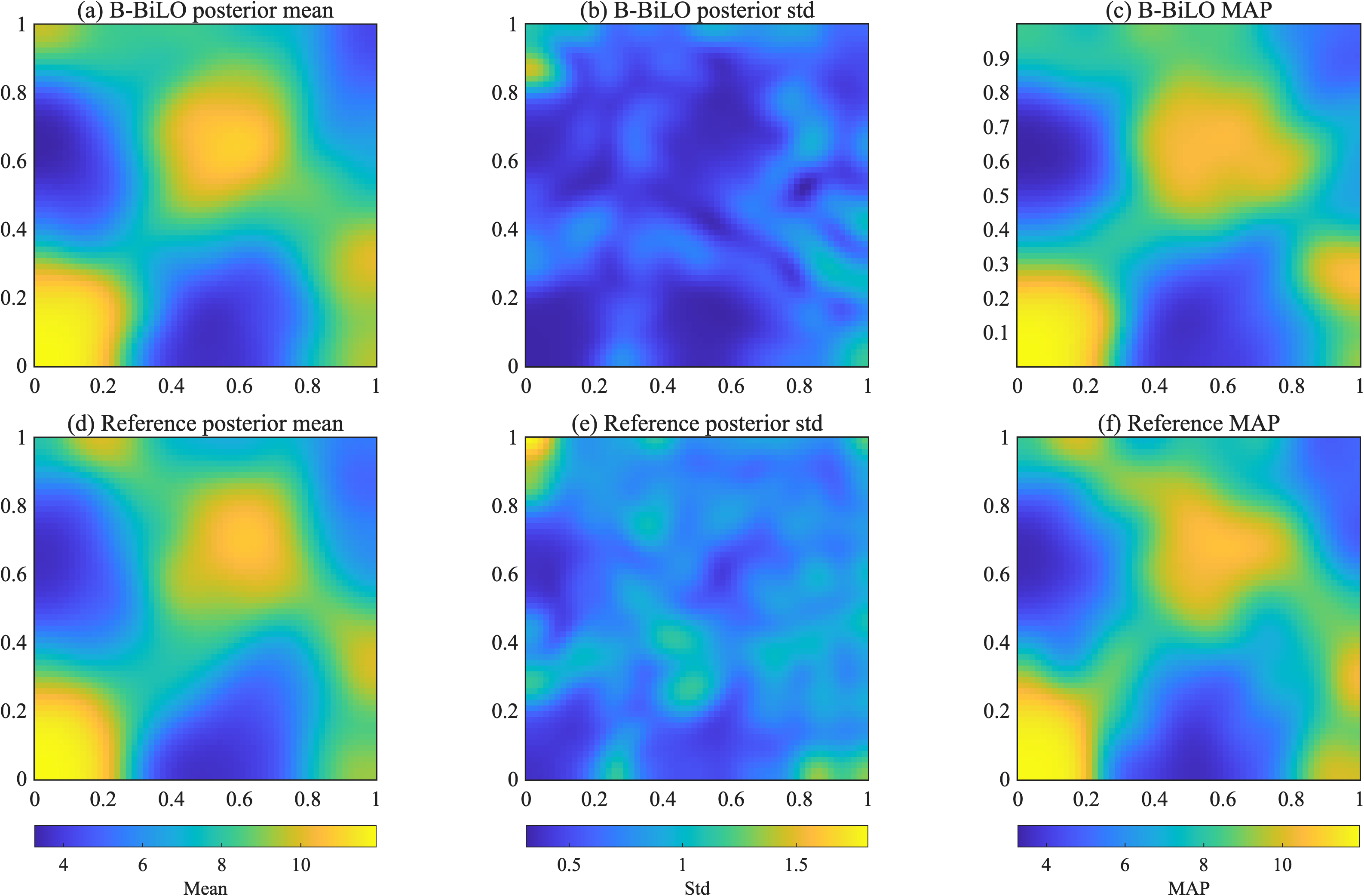}
  \caption{Comparison between B-BiLO with LoRA rank 4 (row 1) and the reference method (MH with numerical solver, row 2) for the 2D Darcy flow problem
  The first columns show the inferred posterior mean of the diffusion coefficient $D(\bx)$.
  The second columns show the posterior standard deviation of $D(\bx)$.
  The third columns show the MAP predicted $D(\bx)$.
  }
  \label{f:darcy2d_bayes_vsref}
\end{figure}

\end{appendices}

\backmatter

\bibliography{bilolora}